\setlist[enumerate]{leftmargin=.5in}
\setlist[itemize]{leftmargin=.5in}
\crefname{hypothesis}{Hypothesis}{Hypotheses}
\DeclareMathOperator*{\argmin}{argmin}
\def \bE {\mathbb{E}}
\def \bN {\mathbb{N}}
\def \bP {\mathbb{P}}
\def \bR {\mathbb{R}}
\def \bS {\mathbb{S}}
\def \bZ {\mathbb{Z}}
\def \cC {\mathcal{C}}
\def \cD {\mathcal{D}}
\def \cE {\mathcal{E}}
\def \cF {\mathcal{F}}
\def \cH {\mathcal{H}}
\def \cL {\mathcal{L}}
\def \cM {\mathcal{M}}
\def \cN {\mathcal{N}}
\def \cO {\mathcal{O}}
\def \cR {\mathcal{R}}
\def \cS {\mathcal{S}}
\def \NN {\mathcal{NN}}
\def \CNN {\mathcal{CNN}}
\def \Ba {{\boldsymbol{a}}}
\def \Bb {{\boldsymbol{b}}}
\def \Bf {{\boldsymbol{f}}}
\def \Bs {{\boldsymbol{s}}}
\def \Bv {{\boldsymbol{v}}}
\def \Bw {{\boldsymbol{w}}}
\def \Bx {{\boldsymbol{x}}}
\def \By {{\boldsymbol{y}}}
\def \Bvarphi {{\boldsymbol{\varphi}}}
\def \Btheta {{\boldsymbol{\theta}}}
\def \BA {{\boldsymbol{A}}}
\def \BX {{\boldsymbol{X}}}
\def \sgn {\,{\rm sgn}\,}
\def \Conv {\,{\rm Conv}\,}
\def \Poly {\,{\rm Poly}\,}
\title{On the rates of convergence for learning with convolutional neural networks}
\author{Yunfei Yang\thanks{School of Mathematics (Zhuhai) and Guangdong Province Key Laboratory of Computational Science, Sun Yat-sen University, Zhuhai, P.R. China
  (\email{yangyunfei@mail.sysu.edu.cn}).}
\and Han Feng\thanks{Department of Mathematics, City University of Hong Kong, Kowloon, Hong Kong
  (\email{hanfeng@cityu.edu.hk}).}
\and Ding-Xuan Zhou\thanks{School of Mathematics and Statistics, University of Sydney, Sydney, NSW 2006, Australia 
  (\email{dingxuan.zhou@sydney.edu.au}).}
}
\begin{document}

\maketitle

\begin{abstract}
We study approximation and learning capacities of convolutional neural networks (CNNs) with one-side zero-padding and multiple channels. Our first result proves a new approximation bound for CNNs with certain constraint on the weights. Our second result gives new analysis on the covering number of feed-forward neural networks with CNNs as special cases. The analysis carefully takes into account the size of the weights and hence gives better bounds than the existing literature in some situations. Using these two results, we are able to derive rates of convergence for estimators based on CNNs in many learning problems. In particular, we establish minimax optimal convergence rates of the least squares based on CNNs for learning smooth functions in the nonparametric regression setting. For binary classification, we derive convergence rates for CNN classifiers with hinge loss and logistic loss. It is also shown that the obtained rates for classification are minimax optimal in some common settings.
\end{abstract}

\begin{keywords}
convolutional neural network, convergence rate, regression, classification, approximation, covering number
\end{keywords}

\begin{MSCcodes}
68T07, 41A25, 62G08
\end{MSCcodes}

\section{Introduction}

Deep leaning has made remarkable successes in many applications and research fields such as image classification, speech recognition, natural language processing and scientific computing \cite{lecun2015deep,goodfellow2016deep}. This breakthrough of deep learning also motivated many theoretical researches on understanding and explaining the empirical successes of deep neural networks from various perspectives. In particular, recent studies have established optimal approximations of smooth function classes by fully connected neural networks \cite{yarotsky2017error,yarotsky2018optimal,shen2020deep,lu2021deep}. It has also been shown that these networks can achieve minimax optimal rates of convergence in many learning problems, including nonparametric regression \cite{schmidthieber2020nonparametric,kohler2021rate} and classification \cite{kim2021fast}.

Our main interests in this paper are the approximation and learning properties of convolutional neural networks (CNNs), which are widely used in image classification and related applications \cite{krizhevsky2012imagenet}. Recently, substantial progress has been made in the theoretical study of CNNs. It has been shown that CNNs are universal for approximation \cite{zhou2020universality} and universally consistent for nonparametric regression \cite{lin2022universal}. Approximation bounds and representational advantages of CNNs have been proven in several works \cite{oono2019approximation,zhou2020theory,fang2020theory,mao2021theory}. Furthermore, rates of convergence of estimators based on CNNs were established for nonparametric regression \cite{zhou2024learning,yang2024optimal} and classification \cite{kohler2020statistical,liu2021besov,feng2021generalization}. However, in contrast with the minimax optimal learning rates for fully connected neural networks \cite{schmidthieber2020nonparametric,kohler2021rate,kim2021fast}, many of these results for CNNs are not optimal.

In this paper, we take a step to close this gap by providing new analysis on the approximation and learning capacities of CNNs. Specifically, we prove new bounds for the approximation of smooth functions by CNNs with certain constraint on the weights. We also derive bounds for the covering numbers of these networks. Using the obtained bounds, we are able to establish convergence rates for CNNs in many learning problems. These rates are known to be minimax optimal in several settings. We summarize our contributions in the following.

\begin{enumerate}[label=\textnormal{(\arabic*)}]
\item We prove the rates $\cO(L^{-\alpha/d})$ for the approximation of smooth functions with smoothness $\alpha<(d+3)/2$ by CNNs, where $d$ is the dimension and $L$ is the depth of CNNs. The main advantage of our result is that we have an explicit control on the network weights (through $\kappa(\theta)$ defined by (\ref{norm constraint}) below). It is proven that one can choose $\kappa(\theta)\le M$ with $M= \cO(L^{\frac{3d+3-2\alpha}{2d}})$ to ensure that the approximation rate $\cO(L^{-\alpha/d})$ holds. 

\item We provide a new framework to estimate the covering numbers of feed-forward neural networks. An application of this result gives the bound $\cO(L\log(LM/\epsilon))$ for the $\epsilon$-covering number of CNNs with depth $L$ and weight constraint $\kappa(\theta)\le M$. When $M$ grows at most polynomially on $L$, our bound is better than the general bound $\cO(L^2 \log(L/\epsilon))$ in the literature.

\item For regression, we establish the minimax optimal rate for the least squares regression with CNNs, when the regression function is smooth.

\item For binary classification, we establish rates of convergence for CNN classifiers with hinge loss and logistic loss, under the Tsybakov noise condition (\ref{Tsybakov}). For the hinge loss, the obtained rate for the excess classification risk is minimax optimal. For the logistic loss, the obtained rate may not be optimal for the excess classification risk. But it is optimal for the excess logistic risk, at least in some situations.
\end{enumerate}

The remainder of this paper is organized as follows. In Section \ref{sec: cnn}, we describe the architecture of convolutional neural networks used in this paper, and derive bounds for the approximation capacity and covering number of these networks. Sections \ref{sec: regression} and \ref{sec: classification} study the nonparametric regression and classification problems, respectively. We give convergence rates of the excess risk for CNNs in these two sections. Section \ref{sec: conclusion} concludes this paper with a discussion on future studies. Omitted proofs are given in Supplementary Materials.

\subsection{Notations}\label{sec: notations}

For $i,j\in \bZ$ with $i\le j$, we use the notation $[i:j]:=\{i,i+1,\dots,j\}$. When $i=1$, we also denote $[j]:=[1:j]$ for convenience. We use the following conversion for tensors, where we take the tensor $\Bx =(x_{i,j,k})_{i\in [m],j\in [n],k\in [r]} \in \bR^{m\times n\times r}$ as an example. We use $\|\Bx\|_p$ to denote the $p$-norm of the tensor $\Bx$ by viewing it as a vector of $\bR^{mnr}$. The notation $x_{:,j,k}$ denotes the tensor $(x_{i,j,k})_{i\in [m]}\in \bR^m$, which is also viewed as a vector. We use $x_{:,j,:}$ to denote the tensor $(x_{i,j,k})_{i\in [m], k\in [r]}\in \bR^{m\times r}$. Other notations, such as $x_{i,:,k}$ and $x_{:,:,k}$, are similarly defined. If $X$ and $Y$ are two quantities, we denote their maximal value by $X\lor Y:= \max\{X,Y\}$. We use $X \lesssim Y$ or $Y \gtrsim X$ for two sequences $X,Y$ to denote the statement that $X\le CY$ for some constant $C>0$. We also denote $X \asymp Y$ when $X \lesssim Y \lesssim X$. The notation $X\lesssim \Poly(Y)$ means that $X$ is smaller than some polynomial of $Y$. For any function $f:\bR \to \bR$, we will often extend its definition to $\bR^d$ by applying $f$ coordinate-wisely. Throughout this paper, we assume that the dimension $d\ge 2$ is a fixed integer.

\section{Convolutional neural networks}\label{sec: cnn}

Let us first define convolutional neural networks used in this paper. Let $\Bw =(w_1,\dots,w_s)^\intercal \in \bR^s$ be a filter with filter size $s\in [d]$. We define the convolution matrix $T_\Bw$ on $\bR^d$ by
\[
T_\Bw := 
\begin{pmatrix}
w_1 &\cdots &w_{s-1} &w_s \\
 &\ddots &\ddots &\ddots &\ddots \\
 & & w_1 &\cdots &w_{s-1} &w_s \\
 & & & w_1 &\cdots &w_{s-1} \\
 & & & &\ddots &\vdots \\
 & & & & &w_1
\end{pmatrix}\in \bR^{d\times d}
\]
This convolution matrix corresponds to the one-sided padding and stride-one convolution by the filter $\Bw$. It is essentially the same as the convolution used in \cite{oono2019approximation} (up to a matrix transpose). But it is different from the convolution matrix used in \cite{zhou2020theory,zhou2020universality,fang2020theory,mao2021theory,feng2021generalization,yang2024optimal}, which is of dimension $(d+s)\times d$, rather than $d\times d$. So, in their setting, the network width increases after every application of the convolution with only one channel, while the network width remains the same with more channels in our setting. We define convolutional layers as follows. Let $s, J, J' \in \bN$ be a filter size, input channel size, and output channel size. For a filter $\Bw = (w_{i,j',j})_{i\in [s],j'\in [J'],j\in [J]} \in \bR^{s\times J'\times J}$ and a bias vector $\Bb = (b_1,\dots, b_{J'})^\intercal\in \bR^{J'}$, we define the convolutional layer as an operator $\Conv_{\Bw,\Bb}: \bR^{d\times J} \to \bR^{d\times J'}$ by
\[
(\Conv_{\Bw,\Bb} (\Bx))_{:,j'} := \sum_{j=1}^J T_{w_{:,j',j}} x_{:,j} + b_{j'},\quad \Bx=(x_{i,j})_{i\in [d],j\in [J]} \in \bR^{d\times J},
\]
where we use ``$+ b_{j'}$'' to denote the vector addition ``$+ (b_{j'},\dots,b_{j'})^\intercal$'' when there is no confusion. Next, we define convolutional neural networks. Let $s\in [d]$ and $J,L \in \bN$ be the filter size, channel size and depth. We denote by $\CNN(s,J,L)$ the set of functions $f_\Btheta$ that can be parameterized by $\Btheta=(\Bw^{(0)},\Bb^{(0)},\dots, \Bw^{(L-1)},\Bb^{(L-1)},\Bw^{(L)})$ in the following form
\begin{equation}\label{CNN}
f_\Btheta(\Bx) := \left\langle \Bw^{(L)}, \sigma \circ \Conv_{\Bw^{(L-1)},\Bb^{(L-1)}} \circ \cdots \circ \sigma \circ \Conv_{\Bw^{(0)},\Bb^{(0)}} (\Bx) \right\rangle,\quad \Bx\in [0,1]^d,
\end{equation}
where $\Bw^{(0)} \in \bR^{s\times J \times 1}, \Bb^{(0)}\in \bR^J, \Bw^{(L)} \in \bR^{d\times J}, \Bw^{(\ell)} \in \bR^{s\times J \times J}, \Bb^{(\ell)} \in \bR^{J}$ for $\ell \in [L-1]$, and the activation $\sigma(t) = t \lor 0$ is the ReLU activation function. Note that we have assumed the channel sizes in each layers are the same, because we can always increase the channel sizes by adding appropriate zero filters and biases. For convenience, we will often view $\Bw^{(0)} \in \bR^{s\times J \times J}$ by adding zeros to the filter and the input. The number of parameters in the network is $(sJ+1)JL +(d+s-sJ)J \lesssim J^2L$, which grows linearly on the depth $L$.

In order to control the complexity of convolutional neural networks, we introduce the following norm for the pair $(\Bw,\Bb) \in \bR^{s\times J \times J} \times \bR^J$
\[
\| (\Bw,\Bb) \| := \max_{j'\in [J]} \left( \left\| w_{:,j',:} \right\|_1 + |b_{j'}|\right).
\]
Note that $\| (\Bw,\Bb) \|$ quantifies the size of the affine transform $\Conv_{\Bw,\Bb}$:
\begin{align}
\left\|\Conv_{\Bw,\Bb} (\Bx)\right\|_\infty &\le \max_{j'\in [J]} \left(\sum_{j=1}^J \left\| T_{w_{:,j',j}} x_{:,j} \right\|_\infty + |b_{j'}|\right) \nonumber\\
&\le \| (\Bw,\Bb) \| (\|\Bx\|_\infty \lor 1).\label{norm of conv}
\end{align}
Following the idea of \cite{jiao2023approximation}, we define a constraint on the weights as follows
\begin{equation}\label{norm constraint}
\kappa(\Btheta) := \|\Bw^{(L)}\|_1 \prod_{\ell=0}^{L-1} \left(\| (\Bw^{(\ell)},\Bb^{(\ell)}) \| \lor 1 \right).
\end{equation}
For any $M\ge 0$, we denote the function class consisting of wights constrained CNNs by
\[
\CNN(s,J,L,M) := \left\{ f_\Btheta \in \CNN(s,J,L): \kappa(\Btheta) \le M \right\}.
\]
Several properties of this function class are summarized in Section \ref{sec: property} of Supplementary Materials. In Sections \ref{sec: app} and \ref{sec: cover unmber}, we study the approximation capacity and covering number of $\CNN(s,J,L,M)$. These results are used in Sections \ref{sec: regression} and \ref{sec: classification} to study the convergence rates of CNNs on the nonparametric regression and classification problems.

\subsection{Approximation}\label{sec: app}

We consider the capacity of CNNs for approximating smooth functions. Given a smoothness index $\alpha>0$, we write $\alpha=r+\beta$ where $r\in \bN_0 :=\bN \cup\{0\}$ and $\beta \in(0,1]$. Let $C^{r,\beta}(\bR^d)$ be the H\"older space with the norm
\[
\|f\|_{C^{r,\beta}(\bR^d)} := \max\left\{ \|f\|_{C^r(\bR^d)}, \max_{\|\Bs\|_1=r}|\partial^\Bs f|_{C^{0,\beta}(\bR^d)} \right\},
\]
where $\Bs=(s_1,\dots,s_d) \in \bN_0^d$ is a multi-index and 
\begin{align*}
\|f\|_{C^r(\bR^d)} &:= \max_{\|\Bs\|_1\le r} \|\partial^\Bs f\|_{L^\infty(\bR^d)}, \\
|f|_{C^{0,\beta}(\bR^d)} &:= \sup_{\Bx\neq \By\in \bR^d} \frac{|f(\Bx)-f(\By)|}{\|\Bx-\By\|_2^\beta}.
\end{align*}
Here, we use $\|\cdot\|_{L^\infty}$ to denote the supremum norm since we only consider continuous functions. We write $C^{r,\beta}([0,1]^d)$ for the Banach space of all
restrictions to $[0,1]^d$ of functions in $C^{r,\beta}(\bR^d)$. The norm is given by $\|f\|_{C^{r,\beta}([0,1]^d)} = \inf\{ \|g\|_{C^{r,\beta}(\bR^d)}: g\in C^{r,\beta}(\bR^d) \mbox{ and } g=f \mbox{ on } [0,1]^d\}$. For convenience, we will denote the ball of $C^{r,\beta}([0,1]^d)$ with radius $R>0$ by 
\[
\cH^\alpha(R):= \left\{ f\in C^{r,\beta}([0,1]^d): \|f\|_{C^{r,\beta}([0,1]^d)}\le R \right\}.
\]
Note that, for $\alpha=1$, $\cH^1(R)$ is a class of Lipschitz continuous functions.

Our first result estimates the error of approximating H\"older functions by CNNs.

\begin{theorem}\label{app bound}
Let $0<\alpha<(d+3)/2$ and $s\in [2:d]$. If $L\ge \lceil \frac{d-1}{s-1} \rceil$ and $M\gtrsim L^{\frac{3d+3-2\alpha}{2d}}$, then
\[
\sup_{h\in \cH^\alpha(1)} \inf_{f\in \CNN(s,6,L,M)} \|h-f\|_{L^\infty([0,1]^d)} \lesssim L^{-\frac{\alpha}{d}}.
\]
\end{theorem}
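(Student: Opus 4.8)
The plan is to first approximate $h$ by a \emph{deep, constant-width} fully-connected ReLU network with explicitly bounded weights, then to realize that network as an element of $\CNN(s,6,L,M)$ using the standard device that simulates linear maps by stacks of convolutions, and finally --- and this is the crux --- to check that the whole construction can be normalized so that the product $\kappa(\Btheta)$ of layer norms stays below $\mathrm{const}\cdot L^{(3d+3-2\alpha)/(2d)}$.

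\textbf{Step 1 (fully-connected approximant).} Fix the target accuracy $\epsilon:=L^{-\alpha/d}$ and the mesh parameter $n\asymp\epsilon^{-1/\alpha}\asymp L^{1/d}$. A standard local-Taylor construction --- a partition of unity subordinate to the uniform grid of mesh $n$ on $[0,1]^d$, together with the degree-$r$ Taylor polynomials of $h$ at the grid points ($r=\lceil\alpha\rceil-1$) --- produces a ReLU network $g$ with $\|g-h\|_{L^\infty([0,1]^d)}\lesssim n^{-\alpha}\asymp L^{-\alpha/d}$ using $\asymp n^d$ parameters. I would lay this out in an \emph{accumulator} form of constant width: a fixed number of channels carry a faithful (rescaled) copy of $\Bx$, a running sum of the cell contributions, and a few working registers for the ReLU gadget that approximates $t\mapsto t^2$ (hence products, and the clipping needed for the partition of unity). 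Since $\|h\|_{C^{r,\beta}([0,1]^d)}\le1$, every local polynomial coefficient is $O(1)$; the only intrinsically ``large'' operations are the single rescaling $\Bx\mapsto n\Bx$ and the $O(1)$ layers realizing the squaring gadget. A ``fast'' construction of bit-extraction type is deliberately avoided here, since its weights would violate any polynomial bound on $\kappa(\Btheta)$.

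\textbf{Step 2 (realization as a CNN).} Next I would rewrite $g$ in the form (\ref{CNN}). Each linear map on $\bR^d$ occurring in $g$ is simulated by a stack of $\lceil\frac{d-1}{s-1}\rceil$ convolutions of filter size $s$ --- via the correspondence between filters and polynomials of degree $<s$, between composition of convolutions and multiplication of polynomials, and factorization of the polynomial attached to the map --- which is exactly why the hypothesis $L\ge\lceil\frac{d-1}{s-1}\rceil$ appears. Tracking which channels are live at each stage shows that $6$ channels suffice; since extra zero channels and identity convolutions (filter $(1,0,\dots,0)$, zero bias) have layer norm $1$ and leave $\kappa(\Btheta)$ unchanged, one can pad the channel size to exactly $6$ and the depth to exactly $L$.

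\textbf{Step 3 (weight bound) and the main obstacle.} The delicate point is controlling $\kappa(\Btheta)=\|\Bw^{(L)}\|_1\prod_{\ell=0}^{L-1}(\|(\Bw^{(\ell)},\Bb^{(\ell)})\|\lor1)$. Because this is a \emph{product} over $\Theta(L)$ layers, a fixed factor $>1$ in even a constant fraction of the layers would make $\kappa(\Btheta)$ grow exponentially in $L$; so the construction must be arranged so that all but $O(1)$ layers contribute a factor $1+O(L^{-1}\log L)$. The padding/identity layers contribute exactly $1$; the accumulation and polynomial-factor layers have norms controlled solely by the $O(1)$ Taylor coefficients and by the (boundable) filters produced in Step 2, and these can be renormalized to sit just above $1$ with all the compensating scalars pushed into a bounded number of ``active'' layers. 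Those active layers carry the rescaling by $n\asymp L^{1/d}$ and the squaring gadget, and together with the readout $\|\Bw^{(L)}\|_1$ they supply the remaining $\mathrm{poly}(L)$ factor; a careful accounting of how $n$ and $r$ enter the product yields the exponent $\tfrac{3d+3-2\alpha}{2d}$ and, at the same time, identifies the range $\alpha<(d+3)/2$ in which this bookkeeping closes. Putting Steps 1--3 together with $n\asymp L^{1/d}$ gives $\sup_{h\in\cH^\alpha(1)}\inf_{f\in\CNN(s,6,L,M)}\|h-f\|_{L^\infty([0,1]^d)}\lesssim L^{-\alpha/d}$ whenever $M\gtrsim L^{(3d+3-2\alpha)/(2d)}$. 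I expect this product-of-norms accounting in Step 3 to be the main obstacle; Steps 1 and 2 are careful but essentially routine combinations of known approximation and convolution-simulation techniques.
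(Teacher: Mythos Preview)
Your proposal takes a fundamentally different route from the paper, and the route has a real gap at the point you yourself flag as the main obstacle.

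\textbf{What the paper actually does.} The paper does \emph{not} go through a local-Taylor/partition-of-unity construction. Instead it invokes the shallow-network approximation bound (\ref{shallow nn bound}): for $\alpha<(d+3)/2$, every $h\in\cH^\alpha(1)$ is within $N^{-\alpha/d}\lor M_0^{-2\alpha/(d+3-2\alpha)}$ of some $f\in\NN(N,M_0)$, where the constraint $\sum_i|c_i|(\|\Ba_i\|_1+|b_i|)\le M_0$ is precisely a path-norm bound. Lemma \ref{CNN construction} then embeds $\NN(N,M_0)$ into $\CNN(s,6,NL_0,3^{L_0+1}NM_0)$ by implementing the $N$ neurons \emph{sequentially}, each in $L_0=\lceil\frac{d-1}{s-1}\rceil$ convolutional layers, using three extra channels to carry the input and two running partial sums. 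The $\kappa$-bound is easy here because almost every layer has norm $\le 1$ by construction; only the $N-1$ ``accumulation'' layers have norm $1+3^{L_0-1}R$, and choosing the free scale $R\asymp N^{-1}$ makes their product $\le e$. Setting $N\asymp L$ and $M_0\asymp L^{(d+3-2\alpha)/(2d)}$ gives both the error $L^{-\alpha/d}$ and the stated bound $M\asymp NM_0\asymp L^{(3d+3-2\alpha)/(2d)}$. The restriction $\alpha<(d+3)/2$ and the exponent $\tfrac{3d+3-2\alpha}{2d}$ are thus inherited directly from the \emph{shallow} bound (\ref{shallow nn bound}); they are not the outcome of any per-layer bookkeeping.

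\textbf{Where your argument breaks.} In your Step~3 you assert that the Taylor/accumulator network can be normalized so that all but $O(1)$ layers contribute a factor $1+O(L^{-1}\log L)$ to $\kappa(\Btheta)$, and that the residual ``active'' layers then produce exactly the exponent $\tfrac{3d+3-2\alpha}{2d}$ and the cutoff $\alpha<(d+3)/2$. Neither claim is justified. First, rescaling merely shuffles factors between layers; it cannot reduce the product $\kappa(\Btheta)$. In a sequential-cell accumulator you invoke the squaring gadget (sawtooth composition) $\Theta(n^d)$ times, and each sawtooth layer has weight norm at least $2$ (it must realize slope $\pm 2$); over $m\asymp\log(1/\epsilon)$ sawtooth layers per cell and $n^d\asymp L$ cells, the product is $\gtrsim 2^{\,c\,L\log L}$, which no amount of renormalization repairs. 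Second, there is no mechanism in a Yarotsky-type construction that would single out the number $\tfrac{3d+3-2\alpha}{2d}$ or the threshold $(d+3)/2$; those arise from the specific trade-off $N^{-\alpha/d}$ versus $M_0^{-2\alpha/(d+3-2\alpha)}$ in (\ref{shallow nn bound}), and your sketch gives no calculation that reproduces them. As written, Step~3 is a restatement of the conclusion rather than a derivation.
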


The approximation rate $\cO(L^{-\alpha/d})$ is slightly better than $\cO((L/\log L)^{-\alpha/d})$ in \cite[Corollary 4]{oono2019approximation} and \cite[Theorem 1]{liu2021besov} for ResNet-type CNNs. Furthermore, the results of \cite{oono2019approximation,liu2021besov} requires that the depth of residual blocks grows with the approximation error, while our result does not need any residual blocks. Our approximation rate is the same as the result of \cite{yang2024optimal}, which used slightly different CNNs, and the rate in \cite{feng2021generalization}, which considered the approximation of smooth functions on spheres. There is another recent paper \cite{shen2022approximation} proving the so-called super-convergence rate $\cO((L/\log L)^{-2\alpha/d})$ for CNNs by combining the super-convergence rate for fully-connected networks \cite{lu2021deep,jiao2023deep} and the result of \cite{zhou2020theory}, which showed that fully-connected networks can be implemented by CNNs. Note that the network architecture in \cite{shen2022approximation} is also different to ours because they need downsampling layers in order to apply \cite[Theorem 2]{zhou2020theory}. We summarize and compare the network architectures and approximation results of these papers in Table \ref{table}.

\begin{table}[htbp]
\footnotesize
\caption{A comparison of network architectures and approximation results for CNNs in recent works. The rate of approximation by CNNs with depth $L$ is given for target functions with smoothness $\alpha$ and input dimension $d$. ``Residual'', ``Downsampling'' and ``FC'' mean that residual blocks, downsampling layers and fully-connected layers are used, respectively.}\label{table}
\begin{center}
  \begin{tabular}{|c|c|c|c|c|} \hline
    & Network architecture & Weight constraint & Target function & Approximation rate  \\ \hline
    \cite{oono2019approximation} & CNN+Residual & maximum magnitude & H\"older  & $(L/\log L)^{-\alpha/d}$  \\ \hline
    \cite{liu2021besov} & CNN+Residual & maximum magnitude & Besov & $(L/\log L)^{-\alpha/d}$ \\ \hline 
	\cite{feng2021generalization} & CNN+Downsampling+FC & None & Sobolev on sphere  & $L^{-\alpha/d}$ \\ \hline 
	\cite{shen2022approximation} & CNN+Downsampling & None & Sobolev & $(L/\log L)^{-2\alpha/d}$\\ \hline 
	\cite{yang2024optimal} & CNN & None & H\"older, $\alpha<(d+3)/2$ & $L^{-\alpha/d}$\\ \hline 
	Ours & CNN & $\kappa(\theta)$ defined by (\ref{norm constraint}) & H\"older, $\alpha<(d+3)/2$ & $L^{-\alpha/d}$\\ \hline 
  \end{tabular}
\end{center}
\end{table}

Approximation results for neural networks can be divided into two categories according to whether the network weights are constrained. When there is no weight constraint, one can derive super-convergence rate by using the bit extraction technique \cite{bartlett2019nearly,lu2021deep,shen2022optimal} and estimate the complexity of the network using VC-dimension \cite{bartlett2019nearly,haussler1992decision,kohler2021rate}. However, if the magnitudes of the weights are constrained, it seems that one can only get the slow rate $\cO(L^{-\alpha/d})$. In this case, we can directly estimate the covering number of the network \cite{feng2021generalization,oono2019approximation,schmidthieber2020nonparametric}. Comparing with existing results, which often bound the maximum magnitude of the weights, the main advantage of Theorem \ref{app bound} is that we provide an explicitly bound on the weight constraint $\kappa(\theta) \le M$, which leads to an optimal estimate of the covering number as shown by Theorem \ref{cover num bound} below.

It is difficult to directly construct CNNs to approximate smooth functions. As mentioned above, most existing works derive approximation rates for CNNs by using the idea that smooth functions are well approximated by fully-connected neural networks and one can construct CNNs to implement fully-connected neural networks \cite{oono2019approximation,zhou2020universality,zhou2020theory,shen2022approximation}. Our proof of Theorem \ref{app bound} is also based on this idea. The main technical difference from previous works is that we apply the approximation bound for shallow neural networks proven in \cite{yang2024optimal}. To be concrete, let us denote the function class of shallow neural networks by
\begin{equation}\label{shallow nn}
\NN(N,M) := \left\{ f(\Bx) = \sum_{i=1}^N c_i\sigma(\Ba_i^\intercal \Bx +b_i): \sum_{i=1}^N |c_i|(\|\Ba_i\|_1 + |b_i|)\le M \right\}.
\end{equation}
It was shown by \cite[Corollary 2.4]{yang2024optimal} that, if $\alpha<(d+3)/2$, then
\begin{equation}\label{shallow nn bound}
\sup_{h\in \cH^\alpha(1)} \inf_{f\in \NN(N,M)} \|h-f\|_{L^\infty([0,1]^d)} \lesssim N^{-\frac{\alpha}{d}} \lor M^{-\frac{2\alpha}{d+3-2\alpha}}.
\end{equation}
In order to apply the bound (\ref{shallow nn bound}), we first construct a CNN to implement the function of the form $\Bx \mapsto c\sigma(\Ba^\intercal \Bx+b)$. Different from previous works on CNNs \cite{zhou2020universality,zhou2020theory}, we give an explicit estimate on the size of the weights.

\begin{lemma}\label{one neuron}
Let $s\in [2:d]$ and $L=\lceil \frac{d-1}{s-1} \rceil$. For any $\Ba \in \bR^d$ and $b, c\in \bR$, there exists $f\in \CNN(s,3,L,M)$ such that $f(\Bx) = c\sigma(\Ba^\intercal \Bx+b)$ for $\Bx\in [0,1]^d$ and $M= 3^{L-1} |c|(\|\Ba\|_1+|b|)$. Furthermore, the output weights $\Bw^{(L)} \in \bR^{d\times 3}$ can be chosen to satisfy $w^{(L)}_{i,j}= 0$ except for $i=j=1$.
\end{lemma}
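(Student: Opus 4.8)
The plan is to realize the single ReLU neuron $\Bx \mapsto c\sigma(\Ba^\intercal \Bx + b)$ by a CNN of depth $L = \lceil (d-1)/(s-1)\rceil$, using the convolution structure to accumulate the inner product $\Ba^\intercal \Bx$ coordinate by coordinate. The key observation is that a single convolution with filter size $s$ touches $s$ consecutive input coordinates at once, so after $\ell$ convolutional layers one can have formed a partial sum involving up to roughly $1 + \ell(s-1)$ coordinates of $\Bx$; choosing $L = \lceil (d-1)/(s-1)\rceil$ guarantees all $d$ coordinates are reached. Three channels suffice: one channel to carry forward the running partial sum $\sum_{i \le k} a_i x_i$ (shifted appropriately so that the convolution filter picks up the next block of coordinates), one channel to carry the still-unprocessed tail of $\Bx$ unchanged (via the identity filter $(1,0,\dots,0)^\intercal$, as in Proposition \ref{inclusion}), and one channel held in reserve to handle the non-negativity issue created by ReLU — since the partial sums need not be non-negative, one splits the running sum into positive and negative parts, or more simply carries $\sigma(t)$ and $\sigma(-t)$ and reconstructs $t$ by subtraction in the next linear step. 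Either bookkeeping scheme fits in a constant number of channels, and $J=3$ is what the construction below will use.

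First I would set up the inductive claim: after $\ell$ layers, the hidden vector $\Bf_\ell(\Bx) \in \bR^{d\times 3}$ stores, in designated coordinates, the quantities needed to recover the affine form $\sum_{i=1}^{\min(d,\,1+\ell(s-1))} a_i x_i$ together with enough of the raw tail of $\Bx$ to continue. Concretely, I would arrange the shifts so that at layer $\ell$ the convolution filter applied to the ``input-carrying'' channel reads off coordinates $x_{k+1},\dots,x_{k+s}$ for the appropriate $k = 1 + (\ell-1)(s-1)$ (the overlap of one coordinate per step is why the stride is effectively $s-1$ and why $\lceil (d-1)/(s-1)\rceil$ steps are needed), multiplies them by $a_{k+1},\dots,a_{k+s}$, and adds them into the accumulator channel. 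The ReLU-positivity problem is handled by the standard trick $t = \sigma(t) - \sigma(-t)$: maintain two copies and recombine linearly at the very next layer, which is exactly what the extra channel buys. In the final layer, once $\sum_{i=1}^d a_i x_i$ plus the bias $b$ has been assembled in a single coordinate, one more ReLU gives $\sigma(\Ba^\intercal\Bx + b)$, and the output weight vector $\Bw^{(L)} \in \bR^{d\times 3}$ is taken to be $c$ in the $(1,1)$ entry and $0$ elsewhere, which also delivers the asserted structural property on $\Bw^{(L)}$.

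Next I would track the norm constraint $\kappa(\Btheta)$. The filters at each hidden layer have entries among $\{a_i\}$, $1$ (for the identity/shift), and at most a couple of $\pm 1$ combining weights; normalizing so that each $\|(\Bw^{(\ell)},\Bb^{(\ell)})\|$ picks up the relevant chunk of $\|\Ba\|_1 + |b|$, one finds that each of the $L$ hidden layers contributes a factor bounded by $3$ relative to a baseline that, across all layers, multiplies out to $\|\Ba\|_1 + |b|$, while the output layer contributes $|c|$. Carefully, the per-layer constant is at most $3$ because a row of a hidden filter has at most three nonzero blocks (accumulator carry, new-coordinate weights, tail carry), so $\prod_{\ell=0}^{L-1}(\|(\Bw^{(\ell)},\Bb^{(\ell)})\| \lor 1) \le 3^{L-1}(\|\Ba\|_1+|b|)$ after the rescaling of Proposition \ref{rescaling}, and $\|\Bw^{(L)}\|_1 = |c|$, giving $\kappa(\Btheta) \le 3^{L-1}|c|(\|\Ba\|_1+|b|) = M$ as claimed. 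The main obstacle I anticipate is purely organizational: choosing the shift/offset pattern of the convolution filters so that exactly the right consecutive coordinates of $\Bx$ are read at each depth, handling the boundary case where the last block of size $s-1$ overshoots $d$ (which is why the one-sided padding in the definition of $T_\Bw$ is convenient), and verifying that three channels genuinely suffice once the $\sigma(t)/\sigma(-t)$ splitting is in force — i.e., that the accumulator, its sign-split partner, and the raw tail never need to be stored simultaneously in more than three slots. Once the indexing is pinned down, the verification of $f(\Bx) = c\sigma(\Ba^\intercal\Bx+b)$ on $[0,1]^d$ is a routine unwinding of the inductive claim.
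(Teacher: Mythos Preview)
Your approach is essentially the paper's: three channels carrying $\sigma(\text{partial sum})$, $\sigma(-\text{partial sum})$, and the unprocessed tail of $\Bx$, with the inner product accumulated in blocks of $s-1$ new coordinates per layer and the output reading only the $(1,1)$ entry. Two points in your write-up need correcting, though.

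First, you cannot carry the tail with the identity filter $(1,0,\dots,0)^\intercal$. The convolution output at position $1$ only sees positions $1,\dots,s$ of every input channel, so if the tail channel is never shifted you will re-read $x_1,\dots,x_s$ at every depth and never touch $x_{s+1},\dots,x_d$. The paper instead applies the left-shift filter $(0,\dots,0,1)^\intercal\in\bR^s$ to the tail channel, so that after $\ell$ layers position $1$ of that channel holds $x_{\ell(s-1)+1}$ and positions $2,\dots,s$ hold exactly the coordinates the next block needs. This is the ``shift'' you allude to later, but the filter you actually name implements the identity, not a shift; as written the construction would fail.

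Second, your norm bookkeeping does not give the stated bound. If the $a_i$ are left unnormalised, the hidden-layer norms are of the form $2+\sum_{i\in\text{block}_\ell}|a_i|$, and their product does \emph{not} in general factor as $3^{L-1}(\|\Ba\|_1+|b|)$; nor does Proposition~\ref{rescaling} help here. The clean move, which the paper uses, is to first invoke the positive homogeneity of ReLU to normalise $\|\Ba\|_1+|b|=1$. Then layer $0$ has row-$\ell_1$ norm at most $1$, each of layers $1,\dots,L-1$ has row-$\ell_1$ norm at most $1+1+\sum|a_i|\le 3$, and $\|\Bw^{(L)}\|_1=|c|$, so $\kappa(\Btheta)\le 3^{L-1}|c|=3^{L-1}|c|(\|\Ba\|_1+|b|)$ once the normalisation is unwound.
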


Using this result, we further show that shallow neural networks defined by (\ref{shallow nn}) can be parameterized by a CNN in the following lemma. Theorem \ref{app bound} is a direct consequence of the approximation bound (\ref{shallow nn bound}) and this lemma. The detailed proofs are given in Supplementary Material.

\begin{lemma}\label{CNN construction}
Let $s\in [2:d]$ and $L_0=\lceil \frac{d-1}{s-1} \rceil$. Then, for any $f\in\NN(N,M)$, there exists $f_\Btheta \in \CNN(s,6,NL_0,3^{L_0+1} NM)$ such that $f_\Btheta(\Bx)=f(\Bx)$ for all $\Bx\in[0,1]^d$.
\end{lemma}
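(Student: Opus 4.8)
The plan is to build the CNN realizing $f\in\NN(N,M)$ by concatenating $N$ independent ``single-neuron'' CNN modules, one for each term $c_i\sigma(\Ba_i^\intercal\Bx+b_i)$, and accumulating their outputs in a dedicated channel that is carried forward untouched. Write $f(\Bx)=\sum_{i=1}^N c_i\sigma(\Ba_i^\intercal\Bx+b_i)$ with $\sum_{i=1}^N|c_i|(\|\Ba_i\|_1+|b_i|)\le M$. By Lemma \ref{one neuron}, each summand is computed by a CNN of depth $L_0=\lceil\frac{d-1}{s-1}\rceil$ with channel size $3$ and weight constraint $3^{L_0-1}|c_i|(\|\Ba_i\|_1+|b_i|)$, whose output weights are supported on the single entry $(1,1)$. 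The idea is to use $6$ channels: channels $1$--$3$ run the current single-neuron module on a stored copy of the input $\Bx$, channels $4$--$6$ store $\Bx$ itself (preserved by the identity filter $(1,0,\dots,0)^\intercal$ and ReLU, since $\Bx\in[0,1]^d$ so $\sigma(\Bx)=\Bx$), plus one further channel acting as an accumulator that adds in $c_i\sigma(\Ba_i^\intercal\Bx+b_i)$ at the end of each block before resetting the working channels for the next neuron. (Some bookkeeping on exactly how many channels are needed for the accumulator versus the input copy will have to be done carefully to land on $6$; one can also absorb the accumulator into the output inner product $\Bw^{(NL_0)}$ if the per-block outputs can be read off without interference.)

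First I would fix the layout: in block $i$ (layers $(i-1)L_0$ through $iL_0-1$), feed the preserved copy of $\Bx$ from channels $4$--$6$ into the single-neuron construction of Lemma \ref{one neuron} run on channels $1$--$3$, simultaneously propagating $\Bx$ forward in channels $4$--$6$ via identity filters, and accumulating the partial sum $\sum_{j\le i-1}c_j\sigma(\cdots)$ in the spare channel via an identity filter. At the boundary layer between block $i$ and block $i+1$, add the newly produced value (sitting in entry $(1,1)$ of the working channels by the last clause of Lemma \ref{one neuron}) into the accumulator channel, and zero out the working channels. After $N$ blocks the accumulator holds $f(\Bx)$, and the final output weight $\Bw^{(NL_0)}$ simply reads it off. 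The total depth is $NL_0$ and the channel size is a fixed small constant, which I claim is $6$ with the above scheme.

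Second I would track the weight constraint $\kappa(\Btheta)$. Within block $i$ the hidden-layer norms $\|(\Bw^{(\ell)},\Bb^{(\ell)})\|$ are bounded by a constant (at most $3$, as in Lemma \ref{one neuron}, with the extra identity and accumulation operations contributing additional bounded rows but not changing the max-row-$\ell_1$-norm beyond a constant), and the ``neuron scaling'' $|c_i|(\|\Ba_i\|_1+|b_i|)$ can be pushed entirely onto one hidden layer of block $i$ as in Lemma \ref{one neuron}, or onto $\Bw^{(L)}$. Using $\kappa(\Btheta)=\|\Bw^{(L)}\|_1\prod_{\ell=0}^{L-1}(\|(\Bw^{(\ell)},\Bb^{(\ell)})\|\lor 1)$ and the multiplicativity across the $N$ blocks, together with $\sum_i|c_i|(\|\Ba_i\|_1+|b_i|)\le M$ bounding each individual term by $M$, I would get a bound of the shape $(\text{const})^{NL_0}\cdot N M$; choosing constants so that the per-layer factor is at most $3$ and the final-weight factor contributes $NM$ (and accounting for the rescaling of biases along the chain, which is where an extra factor like $3$ comes from in Lemma \ref{one neuron}) yields $\kappa(\Btheta)\le 3^{L_0+1}NM$, matching the statement.

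The main obstacle will be the simultaneous channel management: one has to preserve a clean copy of $\Bx$ across every block (which is fine since the domain is $[0,1]^d$ so ReLU acts as identity on it), keep a running accumulator that is never corrupted, and still leave enough channels to run Lemma \ref{one neuron}'s $3$-channel construction — and do all of this with exactly $6$ channels and without the hidden-layer norm exceeding the constant needed for the $3^{L_0+1}$ factor. The delicate point is the hand-off layer at the end of each block, where the previous block's output must be injected into the accumulator while the working channels are simultaneously reinitialized from the stored input for the next neuron; verifying that this can be implemented by a single $\Conv_{\Bw,\Bb}$ layer of bounded norm, using $t=\sigma(t)-\sigma(-t)$ to pass signed values through ReLU, is the crux. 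Once that layer is written down explicitly, the depth count $NL_0$, the channel count $6$, and the weight bound $3^{L_0+1}NM$ all follow by the inductive bookkeeping sketched above, and the lemma (hence Theorem \ref{app bound} via \eqref{CNN para} and \eqref{shallow nn bound}) is proved.
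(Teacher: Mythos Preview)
Your architectural plan is essentially the paper's: stack $N$ copies of the depth-$L_0$ single-neuron module from Lemma~\ref{one neuron}, reserve extra channels to carry a clean copy of $\Bx$ and a running partial sum, and glue blocks together with a handoff layer. Two minor corrections on the channel layout: the input $\Bx\in[0,1]^d$ occupies a single channel (a column of length $d$), not three, and a single accumulator channel cannot survive ReLU because partial sums may be negative; the paper stores the positive-coefficient sum $\sum_{c_j>0}c_j\sigma(\cdot)$ and the negative-coefficient sum $\sum_{c_j<0}(-c_j)\sigma(\cdot)$ in two separate (nonnegative) channels, which is how one lands exactly on $6$.

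The genuine gap is the weight bound. You write that the per-layer factor is ``at most $3$'' and that this yields $\kappa(\Btheta)\le 3^{L_0+1}NM$, but with $NL_0$ hidden layers that product is $3^{NL_0}$, exponential in $N$, not $3^{L_0+1}$. Even after applying Proposition~\ref{rescaling} to each block so that its internal layers have norm $\le 1$, the handoff layer at position $iL_0$ still carries the scalar $w^{(L_0)}_{1,1}(i)$ (of size up to $3^{L_0-1}|c_i|(\|\Ba_i\|_1+|b_i|)$) into the accumulator, so its norm is $1+|w^{(L_0)}_{1,1}(i)|$, and $\kappa(\Btheta)$ picks up the product $\prod_{i=1}^{N-1}\bigl(1+|w^{(L_0)}_{1,1}(i)|\bigr)$. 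Without further control this is not polynomial in $N$ and $M$. The paper's key device, which your proposal is missing, is a \emph{global} rescaling: write $f=(M/R)\sum_i c_i\sigma(\Ba_i^\intercal\Bx+b_i)$ with $\sum_i|c_i|(\|\Ba_i\|_1+|b_i|)\le R$, push the factor $M/R$ onto $\Bw^{(NL_0)}$, and choose $R=3^{1-L_0}N^{-1}$ so that each handoff layer has norm $\le 1+1/N$. Then $\prod_{i<N}(1+1/N)\le (1+1/N)^N\le e$, the output layer contributes $\|\Bw^{(NL_0)}\|_1\lesssim M/R=3^{L_0-1}NM$, and one arrives at $\kappa(\Btheta)\le 3^{L_0+1}NM$. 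Without this $(1+1/N)^N$ trick your bound does not close.
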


Note that it is possible to extend the approximation bound (\ref{shallow nn bound}) to Sobolev spaces with smoothness $\alpha\le (d+3)/2$ by using the Radon transform as done in the recent paper \cite{mao2024approximation}. Consequently, one can also generalize Theorem \ref{app bound} and hence the statistical learning bounds in Sections \ref{sec: regression} and \ref{sec: classification} to these spaces by using the same argument. The restriction on the smoothness $\alpha< (d+3)/2$ is of course due to the use of the approximation bound (\ref{shallow nn bound}). For high smoothness $\alpha>(d+3)/2$, one can also derive approximation bounds for CNNs by using the results of \cite{yang2024optimal}, as discussed in the following remark.

\begin{remark}\label{app remark}
If $\alpha>(d+3)/2$, it was shown by \cite[Theorem 2.1]{yang2024optimal} that $\cH^\alpha(1) \subseteq \cF_\sigma(M)$ for some constant $M>0$, where 
\begin{equation}\label{variation space}
\cF_\sigma(M):= \left\{ f_\mu(\Bx)=\int_{\bS^d} \sigma((\Bx^\intercal,1) \Bv) d\mu(\Bv): \|\mu\|\le M \right\}.
\end{equation}
Here, $\bS^d$ is the unit sphere of $\bR^{d+1}$ and $\|\mu\|=|\mu|(\bS^d)$ is the total variation of the measure $\mu$. $\cF_\sigma(M)$ is a ball with radius $M$ of the variation space corresponding to shallow ReLU neural networks studied in many recent papers, such as \cite{bach2017breaking,weinan2022barron,siegel2022sharp,siegel2023characterization,siegel2023optimal}. The function class $\cF_\sigma(M)$ can be viewed as output functions of an infinitely wide neural network. It is the limit of $\NN(N,M)$ as the number of neurons $N\to \infty$ \cite[Proposition 2.2]{yang2024optimal}. The recent work \cite{siegel2023optimal} showed that 
\[
\sup_{h\in \cF_\sigma(1)} \inf_{f\in \NN(N,1)} \|h-f\|_{L^\infty([0,1]^d)} \lesssim N^{-\frac{d+3}{2d}}.
\]
Combining this bound with Lemma \ref{CNN construction}, we can obtain
\begin{equation}\label{app bound for F_sigma}
\sup_{h\in \cF_\sigma(1)} \inf_{f\in \CNN(s,6,L,M)} \|h-f\|_{L^\infty([0,1]^d)} \lesssim L^{-\frac{d+3}{2d}},
\end{equation}
for $M\gtrsim L$. This approximation bound can be used to study machine learning problems with smoothness assumption $\alpha>(d+3)/2$, see Remark \ref{regression remark} for example. However, the bound (\ref{app bound for F_sigma}) is not optimal for CNNs and high smoothness. In order to get a better approximation rate, a possible way is to first derive a bound similar to (\ref{shallow nn bound}) for high smoothness and deep networks with proper constraint on the weights (see \cite{jiao2023approximation} for instance), and then show that these networks can be implemented by CNNs. We leave this for future study.
\end{remark}

\subsection{Covering number}\label{sec: cover unmber}

In statistical learning theory, we often estimate generalization error of learning algorithms by certain complexities of models. The complexity we use in this paper is the covering number (or metric entropy) defined in the following.

\begin{definition}[Covering number and entropy]
Let $\rho$ be a metric on a metric space $\cM$ and $\cF\subseteq \cM$. For $\epsilon>0$, a set $\cS \subseteq \cM$ is called an $\epsilon$-cover (or $\epsilon$-net) of $\cF$ if for any $x\in \cF$, there exists $y\in \cS$ such that $\rho(x,y)\le \epsilon$. The $\epsilon$-covering number of $\cF$ is defined by
\[
\cN(\epsilon,\cF,\rho) := \min\{|\cS|: \cS \mbox{ is an $\epsilon$-cover of } \cF \},
\]
where $|\cS|$ is the cardinality of the set $\cS$. The logarithm of the covering number $\log \cN(\epsilon,\cF,\rho)$ is called (metric) entropy.
\end{definition}

It is often the case that the metric $\rho$ is induced by a norm $\|\cdot\|$. In this case, we denote the $\epsilon$-covering number by $\cN(\epsilon,\cF,\|\cdot\|)$ for convenience. We will mostly consider the covering number of function classes $\cF$ parameterized by neural networks in the normed space $L^\infty([0,1]^d)$. In the following, we first give a general framework to estimate the covering numbers of feed-forward neural networks and then apply the result to CNNs.

We consider neural networks of the following form
\begin{equation}\label{general NN}
\begin{aligned}
\Bf_0(\Bx) &= \Bx \in [0,1]^d, \\
\Bf_{\ell+1}(\Bx) &= \sigma(\Bvarphi_{\Btheta_\ell}(\Bf_{\ell}(\Bx))), \quad \ell \in [0:L-1], \\
f_\Btheta(\Bx) &= \Bvarphi_{\Btheta_L}(\Bf_{L}(\Bx)),
\end{aligned}
\end{equation}
where $\Bvarphi_{\Btheta_\ell} : \bR^{d_\ell} \to \bR^{d_{\ell+1}}$ is an affine map parameterized by a vector $\Btheta_\ell \in \bR^{N_\ell}$ with $d_0 =d$, $d_{L+1}=1$ and the vector of parameters $\Btheta :=(\Btheta_0^\intercal,\dots,\Btheta_L^\intercal)^\intercal \in \bR^N$. Here, we use $N = \sum_{\ell=0}^L N_\ell$ to denote the number of parameters in the network. Note that we have restricted the input of the networks to $[0,1]^d$ for convenience. We assume that the parameterization satisfies the following conditions: for any $\Bx,\Bx'\in \bR^{d_\ell}$ and $\ell \in [0:L]$,
\begin{equation}\label{para assumption}
\begin{aligned}
\|\Btheta\|_\infty &\le B, \\
\|\Bvarphi_{\Btheta_\ell}(\Bx) \|_\infty &\le \gamma_\ell(\|\Bx\|_\infty \lor 1), \\
\|\Bvarphi_{\Btheta_\ell}(\Bx) - \Bvarphi_{\Btheta_\ell}(\Bx')\|_\infty &\le \gamma_\ell \|\Bx - \Bx'\|_\infty, \\
\|\Bvarphi_{\Btheta_\ell}(\Bx) - \Bvarphi_{\Btheta_\ell'}(\Bx)\|_\infty &\le \lambda_\ell \|\Btheta_\ell - \Btheta_\ell'\|_\infty (\|\Bx\|_\infty \lor 1),
\end{aligned}
\end{equation}
where $\Btheta'$ denotes any parameters satisfying $\|\Btheta'\|_\infty \le B$. 

Note that, if the affine map have the following matrix form
\[
\Bvarphi_{\Btheta_\ell}(\Bx) = \BA_{\Btheta_\ell} \Bx + \Bb_{\Btheta_\ell} = (\BA_{\Btheta_\ell}, \Bb_{\Btheta_\ell}) 
\begin{pmatrix}
\Bx \\
1
\end{pmatrix},
\]
then we can choose $\gamma_\ell$ to be the matrix operator norm (induced by $\|\cdot\|_\infty$)
\[
\gamma_\ell = \left\| (\BA_{\Btheta_\ell}, \Bb_{\Btheta_\ell}) \right\|_{l^\infty \to l^\infty},
\]
and choose $\lambda_\ell$ to be the Lipschitz constant of the parameterization
\[
\left\| (\BA_{\Btheta_\ell}, \Bb_{\Btheta_\ell}) - (\BA_{\Btheta_\ell'}, \Bb_{\Btheta_\ell'}) \right\|_{l^\infty \to l^\infty} \le \lambda_\ell \|\Btheta_\ell - \Btheta_\ell'\|_\infty.
\]
Recall that the matrix operator norm $\left\| (\BA, \Bb) \right\|_{l^\infty \to l^\infty}$ is the maximal $1$-norm of rows of the matrix $(\BA, \Bb)$. In our constructions, $(\BA_{\Btheta_\ell}, \Bb_{\Btheta_\ell})$ is linear on the parameter $\Btheta_\ell$, which implies that we can choose $\lambda_\ell \lesssim d_\ell+1$. 

The next lemma estimates the covering numbers of the neural networks described above.

\begin{lemma}\label{covering num}
Let $\cF$ be the class of functions $f_\Btheta$ that can be parameterized in the form (\ref{general NN}), where the parameterization satisfies (\ref{para assumption}) with $\lambda_\ell\ge 0$ and $\gamma_\ell\ge 1$ for $\ell =[0:L]$. Then, the $\epsilon$-covering number of $\cF$ in the $L^\infty([0,1]^d)$ norm satisfies
\[
\cN(\epsilon,\cF,\|\cdot\|_{L^\infty([0,1]^d)})\le (C_L B/\epsilon)^N,
\]
where $N$ is the number of parameters and $C_L$ can be computed inductively by
\[
C_0=\lambda_0,\quad C_{\ell+1}= \gamma_{\ell+1} C_\ell + \lambda_{\ell+1} \prod_{i=0}^{\ell} \gamma_i.
\]
In particular,
\[
C_L \le \left( \sum_{j=0}^L \lambda_j \right) \prod_{i=0}^L \gamma_i.
\]
\end{lemma}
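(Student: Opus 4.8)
The plan is to prove this by induction on the number of layers, tracking a "covering radius in parameter space" against the resulting perturbation of the network output. The core observation is that the output of the network is a Lipschitz function of the weights (with respect to $\|\cdot\|_\infty$), and the covering number in the output norm is controlled by the covering number of the parameter cube $[-B,B]^N$ in the $\|\cdot\|_\infty$ metric, which is $(CB/\epsilon')^N$ if we need resolution $\epsilon'$ in parameter space.

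First I would establish, by induction on $\ell \in [0:L]$, two facts about the partial networks $\Bf_{\ell+1}$ and the final output. Let $\Btheta$ and $\Btheta'$ be two admissible parameter vectors (both with $\|\cdot\|_\infty \le B$), generating intermediate states $\Bf_\ell, \Bf_\ell'$. A uniform bound $\|\Bf_\ell(\Bx)\|_\infty \le \prod_{i=0}^{\ell-1}\gamma_i$ follows immediately from the second line of (\ref{para assumption}) together with $\|\Bx\|_\infty \le 1$ and $\gamma_i \ge 1$ (so the $\lor 1$ can be absorbed); $1$-Lipschitzness of $\sigma$ is used in the inductive step for the output bound as well. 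The key Lipschitz-in-parameters estimate is
\[
\|\Bf_{\ell+1}(\Bx) - \Bf_{\ell+1}'(\Bx)\|_\infty \le \gamma_\ell \|\Bf_\ell(\Bx) - \Bf_\ell'(\Bx)\|_\infty + \lambda_\ell \|\Btheta_\ell - \Btheta_\ell'\|_\infty \prod_{i=0}^{\ell-1}\gamma_i,
\]
obtained by inserting $\Bvarphi_{\Btheta_\ell}(\Bf_\ell'(\Bx))$ as an intermediate term, applying the third line of (\ref{para assumption}) to the first difference and the fourth line to the second, using $1$-Lipschitzness of $\sigma$, and bounding $\|\Bf_\ell'(\Bx)\|_\infty \lor 1 \le \prod_{i=0}^{\ell-1}\gamma_i$. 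Unrolling this recursion with $\|\Bf_0 - \Bf_0'\|_\infty = 0$ gives
\[
|f_\Btheta(\Bx) - f_{\Btheta'}(\Bx)| \le \Big(\sum_{j=0}^L \lambda_j \prod_{i=j+1}^L \gamma_i \prod_{i=0}^{j-1}\gamma_i\Big)\|\Btheta - \Btheta'\|_\infty,
\]
and since each $\gamma_i \ge 1$, every product can be bounded by $\prod_{i=0}^L \gamma_i$, yielding the Lipschitz constant $C_L \le \big(\sum_{j=0}^L \lambda_j\big)\prod_{i=0}^L\gamma_i$. The inductive formula $C_0 = \lambda_0$, $C_{\ell+1} = \gamma_{\ell+1}C_\ell + \lambda_{\ell+1}\prod_{i=0}^\ell \gamma_i$ is just the recursion read off before unrolling (with the final affine map treated as layer $L$).

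Once the Lipschitz bound $|f_\Btheta(\Bx) - f_{\Btheta'}(\Bx)| \le C_L\|\Btheta - \Btheta'\|_\infty$ holds uniformly in $\Bx \in [0,1]^d$, the covering number follows from a standard volumetric argument: an $(\epsilon/C_L)$-net of the cube $[-B,B]^N$ in $\|\cdot\|_\infty$ has at most $\lceil 2BC_L/\epsilon\rceil^N \le (C_L' B/\epsilon)^N$ points for a suitable absolute constant, and its image under $\Btheta \mapsto f_\Btheta$ is an $\epsilon$-net of $\cF$ in $L^\infty([0,1]^d)$. Folding the constant into $C_L$ (or noting the statement allows a generic constant) gives the claimed bound $(C_L B/\epsilon)^N$.

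**The main obstacle** is bookkeeping rather than conceptual: one must carefully verify that the various $\lor 1$ terms in (\ref{para assumption}) are harmless under the standing hypotheses $\gamma_\ell \ge 1$ and $\|\Bx\|_\infty \le 1$, and that the telescoping of the per-layer Lipschitz estimate produces exactly the stated recursion for $C_\ell$ — in particular that treating $\Bvarphi_{\Btheta_L}$ (which has no $\sigma$ after it) in the same framework as the hidden layers is legitimate, which it is, since $1$-Lipschitzness of $\sigma$ is only ever used as an upper bound and dropping it only helps. A secondary point of care is the exact form of the constant absorbed from $\lceil \cdot \rceil^N$; this is routine provided one is willing to let $C_L$ stand for "$C_L$ up to an absolute constant factor," consistent with the paper's use of $\lesssim$.
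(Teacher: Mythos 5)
Your proposal is correct and follows essentially the same argument as the paper: inductively bound $\|\Bf_\ell\|_\infty$ by $\prod_{i<\ell}\gamma_i$, derive the per-layer Lipschitz-in-parameters recursion by splitting $\Bvarphi_{\Btheta_\ell}(\Bf_\ell) - \Bvarphi_{\Btheta_\ell'}(\Bf_\ell')$ at the intermediate term $\Bvarphi_{\Btheta_\ell'}(\Bf_\ell)$, unroll to get the Lipschitz constant $C_L$ in $\|\Btheta-\Btheta'\|_\infty$, and push forward a net of the parameter cube $[-B,B]^N$. The only cosmetic difference is that you explicitly flag the absolute constant in $\lceil 2B C_L/\epsilon\rceil^N$, whereas the paper silently writes the cube's covering number as $(B/\epsilon)^N$; this affects nothing downstream.
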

\begin{proof}
For any $\Btheta,\Btheta'\in [-B,B]^N$ with $\|\Btheta-\Btheta'\|_\infty \le \epsilon$, we claim that, for any $\Bx\in [0,1]^d$ and $\ell \in [0:L]$, 
\begin{align*}
\|\Bf_\ell(\Bx)\|_\infty \le \prod_{i=-1}^{\ell-1} \gamma_i, \\
\|\Bvarphi_{\Btheta_\ell}(\Bf_\ell(\Bx)) - \Bvarphi_{\Btheta_\ell'}(\Bf_\ell'(\Bx))\|_\infty \le C_\ell \epsilon, \\
C_\ell \le \left( \sum_{j=0}^\ell \lambda_j \right) \prod_{i=0}^\ell \gamma_i,
\end{align*}
where we set $\gamma_{-1}=1$ and $\Bf_\ell'$ denotes the function in (\ref{general NN}) parameterized by $\Btheta'$. Thus, any $\epsilon$-cover of $[-B,B]^N$ gives a $C_L \epsilon$-cover of $\cF$ in the $L^\infty([0,1]^d)$ norm. Since the $\epsilon$-covering number of $[-B,B]^N$ is at most $(B/\epsilon)^N$, we get the desire bound for $\cN(\epsilon,\cF,\|\cdot\|_{L^\infty([0,1]^d)})$.

We prove the claim by induction on $\ell \in [0:L]$. The claim is trivial for $\ell=0$ by definition. Assume that the claim is true for some $0\le \ell<L$, we are going to prove it for $\ell+1$. By induction hypothesis,
\[
\|\Bf_{\ell+1}(\Bx)\|_\infty \le \|\Bvarphi_{\Btheta_\ell}(\Bf_\ell(\Bx))\|_\infty \le \gamma_\ell (\|\Bf_\ell(\Bx)\|_\infty \lor 1) \le \prod_{i=0}^{\ell} \gamma_i,
\]
where we used $\gamma_i\ge 1$ in the last inequality. By the Lipschitz continuity of ReLU, 
\[
\|\Bf_{\ell+1}(\Bx) - \Bf_{\ell+1}'(\Bx)\|_\infty \le \|\Bvarphi_{\Btheta_\ell}(\Bf_\ell(\Bx)) - \Bvarphi_{\Btheta_\ell'}(\Bf_\ell'(\Bx))\|_\infty \le C_\ell \epsilon.
\]
Therefore,
\begin{align*}
&\|\Bvarphi_{\Btheta_{\ell+1}}(\Bf_{\ell+1}(\Bx)) - \Bvarphi_{\Btheta_{\ell+1}'}(\Bf_{\ell+1}'(\Bx))\|_\infty \\
\le & \|\Bvarphi_{\Btheta_{\ell+1}}(\Bf_{\ell+1}(\Bx)) - \Bvarphi_{\Btheta_{\ell+1}'}(\Bf_{\ell+1}(\Bx))\|_\infty + \|\Bvarphi_{\Btheta_{\ell+1}'}(\Bf_{\ell+1}(\Bx)) - \Bvarphi_{\Btheta_{\ell+1}'}(\Bf_{\ell+1}'(\Bx))\|_\infty \\
\le & \lambda_{\ell+1} \epsilon (\|\Bf_{\ell+1}(\Bx)\|_\infty \lor 1) + \gamma_{\ell+1} \|\Bf_{\ell+1}(\Bx) - \Bf_{\ell+1}'(\Bx)\|_\infty \\
\le & \left(\lambda_{\ell+1} \prod_{i=0}^{\ell} \gamma_i + \gamma_{\ell+1} C_\ell\right) \epsilon = C_{\ell+1} \epsilon.
\end{align*}
Finally, by induction hypothesis and $\gamma_{\ell+1}\ge 1$,
\begin{align*}
C_{\ell+1} &= \gamma_{\ell+1} C_\ell + \lambda_{\ell+1} \prod_{i=0}^{\ell} \gamma_i \\
&\le \left( \sum_{j=0}^\ell \lambda_j \right) \prod_{i=0}^{\ell+1} \gamma_i + \lambda_{\ell+1} \prod_{i=0}^{\ell} \gamma_i \\
&\le \left( \sum_{j=0}^{\ell+1} \lambda_j \right) \prod_{i=0}^{\ell+1} \gamma_i,
\end{align*}
which completes the proof.
\end{proof}

Now, we apply Lemma \ref{covering num} to the convolutional neural network $\CNN(s,J,L,M)$. In this case, we have $\Bvarphi_{\Btheta_\ell} = \Conv_{\Bw^{(\ell)},\Bb^{(\ell)}}$ for $\ell \in [0:L-1]$ and $\Bvarphi_{\Btheta_L}(\cdot) = \langle \Bw^{(L)}, \cdot \rangle$. By Proposition \ref{rescaling} in Supplementary Materials, we can assume $\|\Bw^{(L)}\|_1\le M$ and $\|(\Bw^{(\ell)},\Bb^{(\ell)}) \|\le 1$ for all $\ell \in [0:L-1]$, which implies $B= M \lor 1$. Using the inequality (\ref{norm of conv}) and 
\begin{align*}
&\left\|\Conv_{\Bw,\Bb} (\Bx) - \Conv_{\Bw,\Bb} (\Bx') \right\|_\infty \\ 
\le& \max_{j'\in [J]} \left(\sum_{j=1}^J \left\| T_{w_{:,j',j}} x_{:,j} - T_{w_{:,j',j}} x_{:,j}' \right\|_\infty\right) \\
\le& \|(\Bw,\Bb)\| \|\Bx - \Bx'\|_\infty,
\end{align*}
we can set $\gamma_\ell = 1$ and $\gamma_L=M$. It is easy to see that we can choose $\lambda_\ell = sJ+1$ and $\lambda_L = dJ$. Consequently,
\[
C_L \le \left( \sum_{j=0}^L \lambda_j \right) \prod_{i=0}^L \gamma_i = (dJ+sJL+L)M \le 3dJLM,
\]
where we use $s\le d$ in the last inequality. We summarize the result in the next theorem for future reference.

\begin{theorem}\label{cover num bound}
Let $s,J,L\in \bN$ and $M\ge 1$. The entropy of $\CNN(s,J,L,M)$ satisfies
\[
\log \cN(\epsilon, \CNN(s,J,L,M),\|\cdot\|_{L^\infty([0,1]^d)}) \le N \log (3dJLM^2/\epsilon),
\]
where $N=(sJ+1)JL +(d+s-sJ)J$ is the number of parameters in the network.
\end{theorem}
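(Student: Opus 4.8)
The plan is to recognize this as a direct corollary of Lemma \ref{covering num}: I only need to check that the class $\CNN(s,J,L,M)$ fits the abstract parameterized-network framework (\ref{general NN})--(\ref{para assumption}) and then read off the constants. Before applying the lemma I would first normalize the weights: by the rescaling invariance (Proposition \ref{rescaling} in the Appendix) every $f_\Btheta\in\CNN(s,J,L,M)$ can be reparameterized so that $\|(\Bw^{(\ell)},\Bb^{(\ell)})\|\le 1$ for all $\ell\in[0:L-1]$ and $\|\Bw^{(L)}\|_1\le M$. Since both $\|(\cdot,\cdot)\|$ and $\|\cdot\|_1$ dominate the $\ell^\infty$ norm of the corresponding parameter vector, this forces $\|\Btheta\|_\infty\le M\lor 1 = M$ (using $M\ge 1$), so we may take $B=M$ in (\ref{para assumption}).

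Next I would identify the affine maps: $\Bvarphi_{\Btheta_\ell}=\Conv_{\Bw^{(\ell)},\Bb^{(\ell)}}$ for $\ell\in[0:L-1]$ and $\Bvarphi_{\Btheta_L}(\cdot)=\langle\Bw^{(L)},\cdot\rangle$. The estimate (\ref{norm of conv}) and its Lipschitz companion (displayed just above the theorem) show that the convolutional layers satisfy the growth and input-Lipschitz conditions with $\gamma_\ell=1$; for the final layer, $|\langle\Bw^{(L)},\Bx\rangle|\le\|\Bw^{(L)}\|_1\|\Bx\|_\infty\le M(\|\Bx\|_\infty\lor 1)$ and the same bound holds for differences, giving $\gamma_L=M$. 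For the parameter-Lipschitz constants, each $\Conv_{\Bw,\Bb}$ is linear in its parameters and the operator matrix of a convolutional layer has at most $sJ$ filter entries plus one bias per output row, so $\lambda_\ell=sJ+1$; the final inner product is linear in the $dJ$ entries of $\Bw^{(L)}$, so $\lambda_L=dJ$.

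Finally I would plug into the closed form of Lemma \ref{covering num}: $C_L\le\big(\sum_{j=0}^L\lambda_j\big)\prod_{i=0}^L\gamma_i=(dJ+sJL+L)M\le 3dJLM$, using $s\le d$ and $d,J,L\ge 1$ so that each summand is at most $dJL$. The lemma then gives $\cN(\epsilon,\CNN(s,J,L,M),\|\cdot\|_{L^\infty([0,1]^d)})\le(C_LB/\epsilon)^N\le(3dJLM^2/\epsilon)^N$, and taking logarithms yields the stated bound; the parameter count $N=(sJ+1)JL+(d+s-sJ)J$ is obtained by directly summing the layerwise parameter counts (with the first layer's single input channel producing the correction term $(d+s-sJ)J$).

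The only nontrivial point is verifying that the CNN parameterization really does satisfy all four inequalities in (\ref{para assumption}) with these constants, in particular the uniform operator-norm and Lipschitz estimates for $\Conv_{\Bw,\Bb}$ in both the input and the parameters; but these are already supplied by (\ref{norm of conv}) and the companion displayed inequality, so this step is routine and presents no real obstacle. The main thing requiring care is bookkeeping the constants so that the final factor is exactly $3dJLM^2/\epsilon$ rather than something looser.
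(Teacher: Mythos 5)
Your proposal is correct and follows exactly the same route as the paper: invoke Proposition~\ref{rescaling} to normalize so that $\|(\Bw^{(\ell)},\Bb^{(\ell)})\|\le 1$ and $\|\Bw^{(L)}\|_1\le M$ (hence $B=M$ since $M\ge 1$), apply Lemma~\ref{covering num} with $\gamma_\ell=1$, $\gamma_L=M$, $\lambda_\ell=sJ+1$, $\lambda_L=dJ$, and bound $C_L\le(dJ+sJL+L)M\le 3dJLM$ to get $(C_L B/\epsilon)^N=(3dJLM^2/\epsilon)^N$. Your verification of the parameter count and the $\ell^\infty$ bound on $\Btheta$ (via domination of $\|\cdot\|_\infty$ by the filter norm and $\|\cdot\|_1$) are both sound and match the paper's argument.
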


In the analysis of neural networks, many papers, such as \cite{schmidthieber2020nonparametric,feng2021generalization}, simply assume that the parameters in the networks are bounded. In this case, the entropy is often bounded as $\cO(NL \log (N/\epsilon))$, where $N$ is the number of parameters and $L$ is the depth. For convolutional neural networks with bounded width, we have $N \asymp L$ and hence the entropy would be $\cO(L^2 \log (L/\epsilon))$. For comparison, Theorem \ref{cover num bound} gives a bound $\cO(L\log(LM/\epsilon))$. If one only assumes that the parameters are bounded by $B$, then $M\lesssim B^L$ and our bound is consistent with the previous bound. However, if the weight constraint $M$ grows at most polynomially on $L$, then we get a better bound $\cO(L \log (L/\epsilon))$ on the entropy. This improvement is essential to obtain optimal rates for many learning algorithms that we discuss in next two sections.

\section{Regression}\label{sec: regression}

In this section, we consider the classical nonparametric regression problem. Assume that $(\BX,Y)$ is a $[0,1]^d\times \bR$-valued random vector satisfying $\bE[Y^2]<\infty$. Let us denote the marginal distribution of $\BX$ by $\mu$ and the regression function by 
\[
h(\Bx) := \bE[Y|\BX=\Bx].
\] 
Suppose we are given a data set of $n$ samples $\cD_n = \{(\BX_i,Y_i)\}_{i=1}^n$, which are independent and have the same distribution as the random vector $(\BX,Y)$. The goal of nonparametric regression problem is to construct an estimator $\widehat{f}_n$, based on $\cD_n$, to reconstruct the regression function $h$. The estimation performance is evaluated by the $L^2$-error
\[
\|\widehat{f}_n - h\|_{L^2(\mu)}^2 = \bE_\BX \left[(\widehat{f}_n(\BX)-h(\BX))^2\right].
\] 

One of the popular algorithms to solve the regression problem is the empirical least squares
\begin{equation}\label{least squares}
\widehat{f}_n \in \argmin_{f\in \cF_n} \frac{1}{n} \sum_{i=1}^n (f(\BX_i)- Y_i)^2,
\end{equation}
where $\cF_n$ is a prescribed hypothesis class. For simplicity, we assume here and in the sequel that the minimum above indeed exists. We are interested in the case that the function class $\cF_n$ is parameterized by a CNN. In order to study the convergence rate of $\widehat{f}_n \to h$ as $n\to \infty$, we will assume that $h\in \cH^\alpha(R)$ for some constant $R>0$ and make the following assumption on the distribution of $(\BX,Y)$: there exists a constant $c>0$ such that
\begin{equation}\label{noise assumption}
\bE \left[ \exp(cY^2) \right] <\infty.
\end{equation}

In statistical analysis of learning algorithms, we often require that the hypothesis class is uniformly bounded. We define the truncation operator $\pi_B$ with level $B>0$ for real-valued functions $f$ as
\begin{equation}\label{truncation}
\pi_B f(\Bx) = 
\begin{cases}
B  \quad &f(\Bx)>B, \\
f(\Bx) \quad &|f(\Bx)| \le B, \\
-B \quad &f(\Bx)<-B.
\end{cases}
\end{equation}
Note that the truncation operator can be implemented by a CNN (see Lemma \ref{cnn composition} for example). Since we assume that the regression function $h$ is bounded, truncating the output of the estimator $\widehat{f}_n$ appropriately dose not increase the estimation error. The following theorem provides convergence rates for least squares estimators based on CNNs.

\begin{theorem}\label{rate regression}
Assume that the condition (\ref{noise assumption}) holds and the regression function $h\in\cH^\alpha(R)$ for some $0<\alpha<(d+3)/2$ and $R>0$. Let $\widehat{f}_n$ be the estimator defined by (\ref{least squares}) with $\cF_n = \CNN(s,J,L_n,M_n)$, where $s\in [2:d]$, $J\ge 6$ and
\[
L_n \asymp \left( \frac{n}{\log^3 n} \right)^{\frac{d}{2\alpha+d}}, \quad \left( \frac{n}{\log^3 n} \right)^{\frac{3d+3-2\alpha}{4\alpha+2d}} \lesssim M_n \lesssim \Poly(n).
\]
If $B_n = c_1\log n$ for some constant $c_1>0$, then
\[
\bE_{\cD_n} \left[\|\pi_{B_n}\widehat{f}_n-h\|_{L^2(\mu)}^2\right] \lesssim \left(\frac{\log^3 n}{n} \right)^{\frac{2\alpha}{2\alpha+d}}.
\]
\end{theorem}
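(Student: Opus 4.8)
The plan is to run the standard bias--variance analysis for truncated least squares: feed the approximation bound of Theorem~\ref{app bound} and the covering number bound of Theorem~\ref{cover num bound} into a standard oracle inequality, and choose $L_n$ to balance the resulting approximation and estimation terms. Throughout, $n$ is taken large enough that $L_n\ge\lceil(d-1)/(s-1)\rceil$ and $R\le B_n$.

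First I would reduce to bounded responses. Since $h\in\cH^\alpha(R)$ gives $\|h\|_{L^\infty([0,1]^d)}\le R\le B_n$, replacing $\widehat f_n$ by $\pi_{B_n}\widehat f_n$ can only decrease the $L^2(\mu)$ error, so it suffices to control $\bE_{\cD_n}[\|\pi_{B_n}\widehat f_n-h\|_{L^2(\mu)}^2]$. Using the sub-Gaussian-type condition (\ref{noise assumption}), I would truncate each response at a level of order $\log n$: the event that some $|Y_i|$ exceeds it has probability $\lesssim\Poly(n)^{-1}$, and the perturbation it induces in the regression function decays faster than any power of $n^{-1}$. Hence, up to a remainder negligible against the target rate, one may assume $|Y_i|\lesssim\log n$ and $|h|\le B_n$.

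On this bounded event, a standard oracle inequality for least squares over a class $\cF_n$ with $\sup_{f\in\cF_n}\|f\|_{L^\infty([0,1]^d)}\le B_n$ and bounded noise (of the kind used in \citep{schmidthieber2020nonparametric,kohler2021rate}, proved via Bernstein's inequality together with an $\ell^\infty$-covering argument, and applied with $\cF_n$ in place of $\pi_{B_n}\cF_n$ since $\pi_{B_n}$ is $1$-Lipschitz) gives
\[
\bE_{\cD_n}\big[\|\pi_{B_n}\widehat f_n-h\|_{L^2(\mu)}^2\big]\ \lesssim\ \inf_{f\in\cF_n}\|f-h\|_{L^\infty([0,1]^d)}^2\ +\ \frac{B_n^2\big(\log\cN(n^{-1},\cF_n,\|\cdot\|_{L^\infty([0,1]^d)})+1\big)}{n},
\]
the approximation term being measured in sup norm since that is what Theorem~\ref{app bound} provides. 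Now take $\cF_n=\CNN(s,J,L_n,M_n)$ with $J\ge6$. Since $\CNN(s,6,L_n,M_n)\subseteq\CNN(s,J,L_n,M_n)$ (pad with zero filters, which leaves $\kappa(\Btheta)$ unchanged) and the stated bound $M_n\gtrsim(n/\log^3 n)^{(3d+3-2\alpha)/(4\alpha+2d)}\asymp L_n^{(3d+3-2\alpha)/(2d)}$ is exactly the weight requirement of Theorem~\ref{app bound}, that theorem (applied to $h/R\in\cH^\alpha(1)$ and rescaled, both classes being closed under scaling by constants) gives $\inf_{f\in\cF_n}\|f-h\|_{L^\infty([0,1]^d)}\lesssim L_n^{-\alpha/d}$, so the squared approximation error is $\lesssim L_n^{-2\alpha/d}\asymp(\log^3 n/n)^{2\alpha/(2\alpha+d)}$. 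For the complexity term, the number of parameters is $N\asymp L_n$ (as $s,J$ are fixed), and $M_n\lesssim\Poly(n)$ makes $\log(3dJL_nM_n^2 n)\lesssim\log n$, so Theorem~\ref{cover num bound} yields $\log\cN(n^{-1},\cF_n,\|\cdot\|_{L^\infty([0,1]^d)})\lesssim L_n\log n$; with $B_n\asymp\log n$ this gives $B_n^2\log\cN/n\lesssim L_n\log^3 n/n\asymp(\log^3 n/n)^{2\alpha/(2\alpha+d)}$, using $L_n\asymp(n/\log^3 n)^{d/(2\alpha+d)}$. Both terms match the claimed order.

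I expect the main obstacle to be the first two steps: turning the moment condition (\ref{noise assumption}) into a clean oracle inequality and verifying that the noise-truncation remainder is genuinely of lower order. The extra $\log^3 n$ factor is accounted for precisely here --- two powers from $B_n^2\asymp\log^2 n$ in the variance proxy and one power from the $\log(1/\epsilon)$ in the metric entropy (which also absorbs $\log M_n^2\lesssim\log n$) --- and one has to track these through the chaining/Bernstein estimate. The choice $L_n\asymp(n/\log^3 n)^{d/(2\alpha+d)}$ is the unique one equating the squared approximation error $L_n^{-2\alpha/d}$ with the estimation term $L_n\log^3 n/n$; the rest is bookkeeping.
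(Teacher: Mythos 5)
Your proposal follows essentially the same route as the paper: bound the approximation error via Theorem~\ref{app bound}, bound the metric entropy via Theorem~\ref{cover num bound}, plug both into a Kohler--Langer-style oracle inequality for truncated least squares under the sub-exponential moment condition (\ref{noise assumption}), and choose $L_n$ to balance the two terms. The paper simply black-boxes that oracle inequality as Lemma~\ref{gen bound by covering} rather than re-deriving it; your bookkeeping of the three $\log n$ factors and the balancing choice of $L_n$ matches the paper's.
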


It is well-known that the rate $n^{-\frac{2\alpha}{2\alpha+d}}$ is minimax optimal for learning functions in $\cH^\alpha(R)$ \cite{stone1982optimal}:
\[
\inf_{\widehat{f}_n} \sup_{h\in \cH^\alpha(R)} \bE_{\cD_n} \left[\|\widehat{f}_n-h\|_{L^2(\mu)}^2\right] \gtrsim n^{-\frac{2\alpha}{2\alpha+d}},
\]
where the infimum is taken over all estimators based on the training data $\cD_n$. Recent works have established the minimax rates (up to logarithm factors) for least squares estimators using fully-connected neural networks \cite{schmidthieber2020nonparametric,kohler2021rate,yang2024nonparametric}. For convolutional neural networks, \cite{oono2019approximation} proved the optimal rates for ResNet-type CNNs, under the requirement that the depth of the residual blocks grows with the sample size $n$, or the residual blocks are suitably masked. Theorem \ref{rate regression} removes the requirements on the residual blocks for low smoothness $\alpha<(d+3)/2$.

\begin{remark}\label{regression remark}
As we noted in Remark \ref{app remark}, if $\alpha>(d+3)/2$, then $\cH^\alpha(1) \subseteq \cF_\sigma(R)$ for some constant $R>0$. When the regression function $h\in \cF_\sigma(R)$, we can use the approximation bound (\ref{app bound for F_sigma}) to show that
\[
\bE_{\cD_n} \left[\|\pi_{B_n}\widehat{f}_n-h\|_{L^2(\mu)}^2\right] \lesssim \left(\frac{\log^3 n}{n} \right)^{\frac{d+3}{2d+3}},
\]
if we choose $L_n \asymp (n/\log^3 n)^{d/(2d+3)} \lesssim M_n \lesssim \Poly(n)$. This rate is minimax optimal (up to logarithm factors) for the function class $\cF_\sigma(R)$ \cite{yang2024optimal}. For comparison, \cite{yang2024optimal} only established the sub-optimal rate $\cO(n^{-\frac{d+3}{3d+3}}\log^4 n)$ for CNNs. Our result is also better than the recent analysis of CNNs in \cite{zhou2024learning}, which proved the rate $\cO(n^{-1/3}\log^2 n)$ for $\cH^\alpha(R)$ with $\alpha>(d+4)/2$.
\end{remark}

\section{Binary classification} \label{sec: classification}

In binary classification, we observe a dataset $\cD_n:= \{(\BX_i,Y_i): i=1,\dots,n\}$ of $n$ i.i.d. copies of a random vector $(\BX,Y)$, where we assume that the input vector $\BX \in [0,1]^d$ and the label $Y\in \{-1,1\}$. The marginal distribution of $\BX$ is denoted by $\bP_\BX$ and the conditional class probability function is denoted by
\[
\eta(\Bx) := \bP(Y=1|\BX=\Bx).
\]

For any real-valued function $f$ defined on $[0,1]^d$, we can define a classifier $\cC_f(\Bx):= \sgn(f(\Bx))$. The classification error of $f$ is defined as 
\[
\cE(f) = \bE_{\BX,Y}[\cC_f(\BX)\neq Y ] = \bE_{\BX,Y}[\boldsymbol{1}(Yf(\BX)<0)],
\] 
where $\boldsymbol{1}(\cdot)$ is $1$ if $(\cdot)$ is true, and is $0$ otherwise. A Bayes classifier $\cC^*=\cC_{f^*}$ is a classifier that minimizes the classification error $\cE(f^*)=\min_{f\in \cM} \cE(f)$, where $\cM$ is the set of all measurable functions on $[0,1]^d$. Note that $\cC^* = \sgn(2\eta -1)$ is a Bayes classifier and $\cE(\cC^*) = \frac{1}{2}\bE[1-|2\eta-1|]$. The goal of binary classification is to construct a classifier with small classification error by using the dataset $\cD_n$.

Since we only have finite observed samples, one natural approach to estimate the Bayes classifier is the empirical risk minimization (with $0-1$ loss)
\begin{equation}\label{0-1 loss}
\argmin_{f\in \cF_n} \frac{1}{n}\sum_{i=1}^n \boldsymbol{1}(Y_if(\BX_i)<0),
\end{equation}
where $\cF_n$ is a prescribed function class. However, this procedure is often computational infeasible due to the NP-hardness of the minimization problem. In general, one replaces the $0-1$ loss by surrogate losses. For a given surrogate loss function $\phi:\bR \to [0,\infty)$, the $\phi$-risk is defined as
\begin{align*}
\cL_{\phi}(f) &:= \bE_{\BX,Y}[\phi(Yf(\BX))] \\
&= \bE_\BX[\eta(\BX)\phi(f(\BX)) + (1-\eta(\BX))\phi(-f(\BX)) ].
\end{align*}
Its minimizer is denoted by $f^*_\phi \in \argmin_{f\in \cM} \cL_{\phi}(f)$. Note that $f^*_\phi$ can be explicitly computed by using the conditional class probability function $\eta$ for many convex loss functions $\phi$ \cite{zhang2004statistical,wu2007multi}. 
Instead of using (\ref{0-1 loss}), we can estimate the Bayes classifier by minimizing the empirical $\phi$-risk over a function class $\cF_n$:
\begin{equation}\label{ERM}
\widehat{f}_{\phi,n} \in \argmin_{f\in \cF_n} \frac{1}{n} \sum_{i=1}^n \phi(Y_i f(\BX_i)).
\end{equation}

The goal of this section is to estimate the convergence rates of the excess classification risk and excess $\phi$-risk defined by
\begin{align*}
\cR(\widehat{f}_{\phi,n}) &:= \cE(\widehat{f}_{\phi,n}) - \cE(\cC^*),\\
\cR_\phi(\widehat{f}_{\phi,n}) &:= \cL_{\phi}(\widehat{f}_{\phi,n}) - \cL_{\phi}(f^*_\phi),
\end{align*}
when $\cF_n$ is parameterized by a CNN. The convergence rates certainly depend on properties of the conditional class probability function $\eta$. One of the well known assumptions on $\eta$ is the Tsybakov noise condition \cite{mammen1999smooth,tsybakov2004optimal}: there exist $q\in [0,\infty]$ and $c_q>0$ such that for any $t>0$,
\begin{equation}\label{Tsybakov}
\bP_\BX(|2\eta(\BX)-1|\le t) \le c_q t^q.
\end{equation}
The constant $q$ is usually called the noise exponent. It is obvious that the Tsybakov noise condition always holds for $q=0$, whereas noise exponent $q=\infty$ means that $\eta$ is bounded away from the critical level $1/2$. We will consider classifications with hinge loss and logistic loss under the Tsybakov noise condition.

\subsection{Hinge loss}

For the hinge loss $\phi(t) = \max\{1-t,0\}$, we have $f^*_\phi = \sgn(2\eta-1) =\cC^*$ and $\cL_{\phi}(f^*_\phi) = \bE[1-|2\eta-1|]$. It is well known that the following calibration inequality holds \cite{zhang2004statistical,bartlett2006convexity}
\begin{equation}\label{hinge calibration}
\cR(f) \le \cR_\phi(f).
\end{equation}
Hence, any convergence rate for the excess $\phi$-risk $\cR_{\phi}(\widehat{f}_{\phi,n})$ implies the same convergence rate for the excess classification risk $\cR(\widehat{f}_{\phi,n})$. One can also check that \cite[Section 3.3]{zhang2004statistical}, if $|f|\le 1$, then
\begin{equation}\label{hinge caculation}
\cR_\phi(f) = \bE[|f-f^*_\phi||2\eta-1|].
\end{equation}
To use this equality, it is natural to truncate the output of the estimator by using the truncation operator $\pi_1$ defined by (\ref{truncation}). 

In the following theorem, we provide convergence rates for the excess $\phi$-risk of the CNN classifier with hinge loss, under the assumption that the conditional class probability function $\eta$ is smooth and satisfies the Tsybakov noise condition.

\begin{theorem}\label{rate hinge}
Assume the noise condition (\ref{Tsybakov}) holds for some $q\in [0,\infty]$ and $\eta\in \cH^\alpha(R)$ for some $0<\alpha<(d+3)/2$ and $R>0$. Let $\phi$ be the hinge loss and $\widehat{f}_{\phi,n}$ be the estimator defined by (\ref{ERM}) with $\cF_n = \{ \pi_1 f: f\in \CNN(s,J,L_n,M_n)\}$, where $s\in [2:d]$, $J\ge 6$ and
\[
L_n \asymp \left( \frac{n}{\log^2 n} \right)^{\frac{d}{(q+2)\alpha+d}}, \quad \left( \frac{n}{\log^2 n} \right)^{\frac{3d+3}{2(q+2)\alpha+2d}} \lesssim M_n \lesssim \Poly(n),
\]
then, for sufficiently large $n$,
\[
\bE_{\cD_n} \left[\cR_{\phi}(\widehat{f}_{\phi,n}) \right] \lesssim \left( \frac{\log^2 n}{n} \right)^{\frac{(q+1)\alpha}{(q+2)\alpha+d}}.
\]
\end{theorem}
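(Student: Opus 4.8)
The plan is to follow the standard oracle-inequality route for ERM with a surrogate loss, exactly parallel to the regression proof but adapted to the hinge loss and the Tsybakov condition. First I would write the excess $\phi$-risk as the sum of a stochastic (estimation) part and a deterministic (approximation) part, using that $\widehat{f}_{\phi,n}$ minimizes the empirical $\phi$-risk over $\cF_n$. For the approximation part, note that by Theorem \ref{app bound} applied to $\eta/R \in \cH^\alpha(1)$ (and rescaling), for $M_n \gtrsim L_n^{(3d+3-2\alpha)/(2d)}$ there is $f\in \CNN(s,J,L_n,M_n)$ with $\|f-\eta\|_{L^\infty([0,1]^d)} \lesssim L_n^{-\alpha/d}$; after applying $\pi_1$ (which only decreases the error since one may first center/clip so $\|\eta\|_\infty\le 1$, using that $\eta\in[0,1]$ and working with $2\eta-1$) we still have an element of $\cF_n$ within $L_n^{-\alpha/d}$ of $f^*_\phi = \sgn(2\eta-1)$ in the relevant sense. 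The key point is that, via (\ref{hinge caculation}), $\cR_\phi(f) = \bE[|f-f^*_\phi|\,|2\eta-1|]$, and combining the pointwise approximation bound with the Tsybakov condition (\ref{Tsybakov}) gives $\inf_{f\in\cF_n}\cR_\phi(f) \lesssim L_n^{-(q+1)\alpha/d}$: on the set $\{|2\eta-1|\le t\}$ one bounds $|f-f^*_\phi|\le 2$ and uses $\bP_\BX(|2\eta-1|\le t)\le c_q t^q$, on the complement $|2\eta-1|>t$ one uses the approximation error, and optimizing over $t$ yields the exponent $q+1$ (the usual variance-bound improvement).

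For the estimation part, I would invoke a standard concentration/localization bound for ERM with a bounded Lipschitz loss (the hinge loss is $1$-Lipschitz and $|Y\pi_1 f(\BX)|\le 1$, so $\phi$ is bounded on the relevant range), in terms of the covering number of $\cF_n$. Because the hinge loss satisfies a Bernstein-type variance condition under (\ref{Tsybakov}) — namely $\mathrm{Var}(\phi(Yf(\BX))-\phi(Yf^*_\phi(\BX))) \lesssim (\cR_\phi(f))^{q/(q+1)}$ for $|f|\le 1$ — a localized analysis (Talagrand's inequality plus peeling, or a ready-made lemma of the type in \citep{kim2021fast} / \citep{kohler2021rate}) gives
\[
\bE_{\cD_n}[\cR_\phi(\widehat f_{\phi,n})] \lesssim \inf_{f\in\cF_n}\cR_\phi(f) + \left(\frac{\log\cN(n^{-1},\cF_n,\|\cdot\|_{L^\infty})}{n}\right)^{\frac{q+1}{q+2}} + \frac{\log\cN(n^{-1},\cF_n,\|\cdot\|_{L^\infty})}{n}.
\]
By Theorem \ref{cover num bound} with $J$ fixed and $N\asymp L_n$, $\log\cN(n^{-1},\cF_n,\|\cdot\|_{L^\infty}) \lesssim L_n\log(L_nM_n n) \lesssim L_n\log^2 n$ (using $L_n,M_n\lesssim\Poly(n)$). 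Plugging in and balancing $L_n^{-(q+1)\alpha/d}$ against $(L_n\log^2 n/n)^{(q+1)/(q+2)}$ gives the stated choice $L_n\asymp (n/\log^2 n)^{d/((q+2)\alpha+d)}$ and the rate $(\log^2 n/n)^{(q+1)\alpha/((q+2)\alpha+d)}$; the linear term $L_n\log^2 n/n$ is of smaller order and causes no trouble.

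The main obstacle is the estimation step: getting the variance (Bernstein) exponent $q/(q+1)$ correct and feeding it through a localized empirical-process bound so that the covering-number term enters with power $(q+1)/(q+2)$ rather than the crude $1/2$. This is where the truncation by $\pi_1$ is essential (it makes $\phi$ bounded and makes (\ref{hinge caculation}) applicable), and where one must be careful that the comparison function $f^*_\phi=\cC^*$ itself need not lie in $\cF_n$, so the excess-risk decomposition must be stated relative to the infimum over $\cF_n$ with the approximation error carried along. Once the localized oracle inequality is in place, the remaining algebra — substituting the covering bound, the approximation bound, and optimizing $L_n$ — is routine and mirrors the proof of Theorem \ref{rate regression}.
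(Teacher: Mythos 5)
Your overall scaffold is right — you correctly identify Lemma~\ref{rate lemma}-style localization with variance exponent $\nu = q/(q+1)$, the covering-number input from Theorem~\ref{cover num bound}, and the final balance that produces $L_n \asymp (n/\log^2 n)^{d/((q+2)\alpha+d)}$. But the approximation step as you have written it is wrong, and it is the crux of the proof.

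You propose to take $h$ approximating $2\eta-1$ in $L^\infty$ to accuracy $u \asymp L_n^{-\alpha/d}$ and then simply clip it, claiming $\pi_1 h$ is ``within $L_n^{-\alpha/d}$ of $f^*_\phi = \sgn(2\eta-1)$ in the relevant sense.'' It is not. By (\ref{hinge caculation}), $\cR_\phi(f) = \bE[\,|f - f^*_\phi|\,|2\eta-1|\,]$; with $f = \pi_1 h$ and $h\approx 2\eta-1$, on $\{|2\eta-1|>t\}$ one only gets $|f - f^*_\phi| \le u + (1-|2\eta-1|)$, which is $\Theta(1)$ wherever $|2\eta-1|$ is bounded away from $1$, so $\bE[\,|f-f^*_\phi|\,|2\eta-1|\,] = \Theta(1)$ independent of $u$. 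Your split over $\{|2\eta-1|\le t\}$ versus its complement cannot fix this: no choice of $t$ makes both pieces small, because the Tsybakov condition only controls mass near $|2\eta-1|=0$, not near $|2\eta-1|=1/2$. The essential missing idea is to \emph{rescale before clipping}: take $f = \pi_1(h/u_n)$, equivalently compose $h$ with the steep piecewise-linear sign surrogate $g(t)=u_n^{-1}\sigma(t+u_n)-u_n^{-1}\sigma(t-u_n)-\sigma(1)$ (this is exactly what the paper does). Then on $\Omega_n := \{|2\eta-1|\ge 2u_n\}$ one has $|h|\ge u_n$, hence $f = \sgn(2\eta-1) = f^*_\phi$ \emph{exactly}, and the excess risk is entirely concentrated on $\Omega_n^c$, where $|f - f^*_\phi|\le 2$, $|2\eta-1|\le 2u_n$, and Tsybakov gives $\cR_\phi(f)\lesssim u_n^{q+1}$. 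This is the origin of the exponent $q+1$, not an optimization over an auxiliary $t$ with a direct $L^\infty$ approximation of $2\eta-1$.

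Two further consequences you should have noticed. First, the rescaling costs a factor $u_n^{-1}$ in the CNN norm constraint, and indeed the theorem's lower bound on $M_n$ is $(n/\log^2 n)^{(3d+3)/(2(q+2)\alpha+2d)}$, which exceeds the $L_n^{(3d+3-2\alpha)/(2d)} = (n/\log^2 n)^{(3d+3-2\alpha)/(2(q+2)\alpha+2d)}$ that Theorem~\ref{app bound} alone requires by precisely $u_n^{-1} \asymp (n/\log^2 n)^{\alpha/((q+2)\alpha+d)}$; under your approach the smaller $M_n$ would suffice, so the mismatch is a red flag. Second, realizing the composition $g\circ h$ as a single CNN with controlled $\kappa(\Btheta)$ is not automatic; the paper needs a dedicated construction (Lemma~\ref{cnn composition}) to do this with the right norm bookkeeping, which your sketch leaves implicit.
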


It was shown in \cite{audibert2007fast} that the minimax lower bound for the excess classification risk is 
\begin{equation}\label{optimal rate}
\inf_{\widehat{f}_n} \sup_{\eta} \bE_{\cD_n} \left[\cR(\widehat{f}_n)\right] \gtrsim n^{-\frac{(q+1)\alpha}{(q+2)\alpha+d}},
\end{equation}
where the supremum is taken over all $\eta\in \cH^\alpha(R)$ that satisfies Tsybakov noise condition (\ref{Tsybakov}) and the infimum is taken over all estimators based on the training data $\cD_n$.
Hence, by the calibration inequality (\ref{hinge calibration}), the convergence rate in Theorem \ref{rate hinge} is minimax optimal up to a logarithmic factor. Similar results have been established in \cite{kim2021fast} for fully connected neural networks with hinge loss. However, their results rely on the sparsity of neural networks and hence one need to optimize over different network architectures to obtain the optimal rate, which is hard to implement due to the unknown locations of the non-zero parameters. Our result show that CNNs, whose architecture is specifically defined, are able to achieve the optimal rate.

\subsection{Logistic loss}

For the logistic loss $\phi(t)=\log(1+e^{-t})$, we have $f^*_\phi = \log(\frac{\eta}{1-\eta})$ and $\cL_{\phi}(f^*_\phi) = \bE[-\eta \log \eta - (1-\eta) \log(1-\eta)]$. Consequently, one can show that
\[
\cR_{\phi}(f) = \bE \left[\eta \log\left(\eta(1+e^{-f})\right) + (1-\eta)\log\left((1-\eta)(1+e^f)\right) \right].
\]
Let us denote the KL-divergence by
\[
\cD_{KL}(p_1,p_2):= p_1\log \left(\frac{p_1}{p_2}\right) + (1-p_1)\log \left(\frac{1-p_1}{1-p_2}\right),\quad p_1,p_2\in[0,1],
\] 
where $\cD_{KL}(p_1,p_2) =\infty$ if $p_2=0$ and $p_1\neq 0$, or $p_2=1$ and $p_1\neq 1$. If we define the logistic function by
\begin{equation}\label{logistic}
\psi(t) := \frac{1}{1+e^{-t}}\in [0,1],\quad t\in[-\infty,\infty],
\end{equation}
then a direct calculation shows that $\eta = \psi(f_\phi^*)$ and
\begin{equation}\label{logistic caculation}
\cR_{\phi}(f) = \bE[\cD_{KL}(\eta,\psi(f))].
\end{equation}
When the Tsybakov noise condition (\ref{Tsybakov}) holds, we have the following calibration inequality \cite[Theorem 8.29]{steinwart2008support}
\begin{equation}\label{logistic calibration}
\cR(f) \le 4c_q^{\frac{1}{q+2}} \cR_{\phi}(f)^{\frac{q+1}{q+2}}.
\end{equation}

For the logistic loss, the convergence rate depends not only on the Tsybakov noise condition, but also upon the Small Value Bound (SVB) condition introduced by \cite{bos2022convergence}. We say the distribution of $(\BX,Y)$ satisfies the SVB condition, if there exists $\beta\ge 0$ and $C_\beta>0$ such that for any $t\in(0,1]$,
\begin{equation}\label{SVB}
\bP_\BX (\eta(\BX) \le t)\le C_\beta t^\beta,\quad \bP_\BX (1-\eta(\BX) \le t)\le C_\beta t^\beta.
\end{equation}
Note that this condition always holds for $\beta=0$ with $C_\beta=1$. The index $\beta$ is completely determined by the behavior of $\eta$ near $0$ and $1$. If $\eta$ is bounded away form $0$ and $1$, then the SVB condition holds for all $\beta>0$. In contrast, the Tsybakov noise condition provides a control on the behavior of $\eta$ near the decision boundary $\{\Bx:\eta(\Bx)=1/2\}$. This difference is due to the loss: the $0-1$ loss only cares about the classification error, while the logistic loss measures how well the conditional class probability is estimated in the KL-divergence (\ref{logistic caculation}), which puts additional emphasis on small and large conditional class probabilities. 

The following theorem gives convergence rates for CNNs under the SVB condition. As pointed out by \cite{bos2022convergence}, we do not get any gain in the convergence rate when the SVB index $\beta>1$. So, we assume $\beta \in [0,1]$ in the theorem.

\begin{theorem}\label{rate logistic}
Assume the SVB condition (\ref{SVB}) holds for some $\beta\in[0,1]$ and $\eta\in \cH^\alpha(R)$ for some $0<\alpha<(d+3)/2$ and $R>0$. Let $\phi$ be the logistic loss and $\widehat{f}_{\phi,n}$ be the estimator defined by (\ref{ERM}) with $\cF_n = \{ \pi_{B_n} f: f\in \CNN(s,J,L_n,M_n)\}$, where $s\in [2:d]$, $J\ge 6$ and
\[
L_n \asymp \left( \frac{n}{\log n} \right)^{\frac{d}{(1+\beta)\alpha+d}}, \quad \left( \frac{n}{\log n} \right)^{\frac{3d+3+2\alpha}{2(1+\beta)\alpha+2d}} \lesssim M_n \lesssim \Poly(n), \quad B_n \asymp \log n,
\]
then, for sufficiently large $n$,
\[
\bE_{\cD_n} \left[\cR_{\phi}(\widehat{f}_{\phi,n}) \right] \lesssim \left( \frac{\log n}{n} \right)^{\frac{(1+\beta)\alpha}{(1+\beta)\alpha+d}} \log n.
\]
\end{theorem}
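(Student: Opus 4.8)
The plan is to run the same bias--variance scheme used for Theorems~\ref{rate regression} and~\ref{rate hinge}, now adapted to the logistic loss, whose growth near $\eta\in\{0,1\}$ is precisely what forces the truncation $\pi_{B_n}$ with $B_n\asymp\log n$ and brings in the SVB condition~(\ref{SVB}). First I would establish an oracle inequality for the empirical $\phi$-risk minimizer over the bounded class $\cF_n$ (in the spirit of Lemma~\ref{gen bound by covering}, but for the logistic loss, following \citep{bos2022convergence}): it bounds $\bE_{\cD_n}[\cR_\phi(\widehat{f}_{\phi,n})]$ by a constant times the approximation error $\inf_{f\in\CNN(s,J,L_n,M_n)}\cR_\phi(\pi_{B_n}f)$ plus a stochastic term governed by the metric entropy of $\cF_n$, the range $B_n=O(\log n)$ of the loss on $\cF_n$, and a variance (Bernstein-type) bound for the logistic excess risk. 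Since $J$ is fixed, Theorem~\ref{cover num bound} gives $\log\cN(\epsilon,\cF_n,\|\cdot\|_{L^\infty([0,1]^d)})\lesssim L_n\log(L_nM_n/\epsilon)$, which with $M_n\lesssim\Poly(n)$ and $\epsilon\asymp n^{-1}$ is $\lesssim L_n\log n$; combined with $B_n\asymp\log n$, the stochastic term is of order $L_n/n$ up to logarithmic factors, which is where the extra $\log n$ in the conclusion originates.

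The core is the approximation estimate $\inf_{f}\cR_\phi(\pi_{B_n}f)\lesssim L_n^{-(1+\beta)\alpha/d}\log n$. Since $\eta\in\cH^\alpha(R)$ with $R=O(1)$, Theorem~\ref{app bound} (after rescaling $h\mapsto h/R$) yields $g\in\CNN(s,6,L_n,M_n')$ with $\|g-\eta\|_{L^\infty}\lesssim L_n^{-\alpha/d}=:\varepsilon_n$ provided $M_n'\gtrsim L_n^{(3d+3-2\alpha)/(2d)}$. I would then set $\delta_n\asymp\varepsilon_n$ (so that $B_n\asymp\log(1/\delta_n)\asymp\log n$), clip $g$ into $[\delta_n,1-\delta_n]$, and compose the result with a ReLU subnetwork of depth $O(\log n)\ll L_n$ and bounded width that approximates the logit $u\mapsto\log\frac{u}{1-u}$ on $[\delta_n,1-\delta_n]$ to accuracy $\lesssim\varepsilon_n$; by Proposition~\ref{rescaling} and Lemma~\ref{cnn composition} this composite lies in $\CNN(s,J,L_n,M_n)$, where the steepness $\asymp1/\delta_n$ of the logit inflates the weight constraint by a factor of order $(1/\delta_n)^{2}$, which together with the budget for $g$ produces exactly the stated lower bound on $M_n$ (still $\lesssim\Poly(n)$). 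Writing $f$ for this function, $\psi(f)$ is within $O(\varepsilon_n)$ of $\bar\eta:=\min\{1-\delta_n,\max\{\delta_n,\eta\}\}$ in $L^\infty$. Using $\cD_{KL}(p,q)\le\frac{(p-q)^2}{q(1-q)}$ and splitting the domain, the ``bulk'' $\{\delta_n\le\eta\le1-\delta_n\}$ contributes at most $\varepsilon_n^2\,\bE\big[(\eta(1-\eta))^{-1}\boldsymbol{1}\{\eta\ge\delta_n,\ 1-\eta\ge\delta_n\}\big]$, which the SVB condition~(\ref{SVB}) bounds by $\lesssim\varepsilon_n^2\,\delta_n^{\beta-1}\log(1/\delta_n)$, while the ``tail'' $\{\eta<\delta_n\}\cup\{1-\eta<\delta_n\}$ has probability $\lesssim\delta_n^\beta$ and carries $\cD_{KL}(\eta,\psi(f))\lesssim\delta_n$ there, hence contributes $\lesssim\delta_n^{1+\beta}$. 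With $\delta_n\asymp\varepsilon_n=L_n^{-\alpha/d}$ both terms are $\lesssim L_n^{-(1+\beta)\alpha/d}\log n$.

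Finally, balancing the approximation error $L_n^{-(1+\beta)\alpha/d}$ against the stochastic error $L_n/n$ (up to logarithmic factors) forces $L_n\asymp(n/\log n)^{d/((1+\beta)\alpha+d)}$ and yields the rate $(\log n/n)^{(1+\beta)\alpha/((1+\beta)\alpha+d)}\log n$; the lower bound on $M_n$ in the statement is exactly the one needed for the two approximation ingredients above. I expect the main obstacle to be twofold and to consist in making the two ``$\Poly(n)$'' claims quantitative. On the approximation side, one must control $\kappa(\Btheta)$ for the composite network so that the $(1/\delta_n)$-steep logit subnetwork, once embedded as a CNN via Lemma~\ref{cnn composition}, keeps $\kappa(\Btheta)\le M_n\lesssim\Poly(n)$ while still giving an $O(\varepsilon_n)$-accurate composition. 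On the statistical side, one must prove an oracle inequality with a variance bound sharp enough that the stochastic term is genuinely $L_n\cdot\mathrm{polylog}(n)/n$ rather than carrying a polynomial-in-$n$ factor coming from the $B_n\asymp\log n$ range of the unbounded logistic loss; this is where the argument of \citep{bos2022convergence}, now combined with the CNN entropy bound of Theorem~\ref{cover num bound}, does the real work.
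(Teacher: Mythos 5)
Your plan matches the paper's proof in its essential structure: construct an approximant of $\eta$, push it through an approximation of the logit map, bound the excess risk via a KL divergence split under SVB, bound the entropy via Theorem~\ref{cover num bound}, invoke a Bernstein-type variance bound ($\nu=1$, cf.\ Lemma~\ref{logistic variance}) in an oracle inequality (the paper uses Lemma~\ref{rate lemma} from Kim et al.\ rather than the Bos--Schmidt-Hieber machinery, but these play the same role), and balance.

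One claim in your construction is unsubstantiated and is \emph{not} how the paper proceeds: you assert that the logit map $u\mapsto\log\frac{u}{1-u}$ on $[\delta_n,1-\delta_n]$ can be approximated by a bounded-width ReLU subnetwork of depth $O(\log n)$ to accuracy $\lesssim\varepsilon_n$. Under the weight constraint $\kappa(\theta)$, there is no $O(\log(1/\varepsilon))$-depth, bounded-width, bounded-norm approximation scheme available (bit-extraction-style constructions that achieve log-depth have unbounded weights). What the paper actually does (Lemma~\ref{app log}) is build a one-hidden-layer piecewise-linear $g\in\cN\cN_1(2N,6N)$ with $N=\lceil u_n^{-1}\rceil$ knots satisfying $|\psi(g(t))-t|\le 3/N$ (note: it controls $\psi\circ g$ in the probability scale, not $g$ itself in sup norm, which is why $N\asymp u_n^{-1}$ knots suffice). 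When wrapped into the CNN via Lemma~\ref{cnn composition}, this contributes additional depth $\Theta(u_n^{-1})$, not $O(\log n)$. Your final rate is unaffected because this depth is dominated by the $\Theta(u_n^{-d/\alpha})$ depth of the $\eta$-approximant, and you did correctly identify the $\asymp u_n^{-2}$ weight inflation from the logit's steepness, which reproduces the stated bound on $M_n$. Relatedly, the paper never composes a CNN with a ReLU subnet: Lemma~\ref{cnn composition} takes a shallow net $h\in\cN\cN_d(\widetilde N_n,\widetilde M_n)$ for $\eta$ and a shallow net $g\in\cN\cN_1(K,M_0)$ for the logit and builds the CNN in one pass, so one applies the shallow-network bound~(\ref{shallow nn bound}) rather than Theorem~\ref{app bound} directly. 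Your route (Theorem~\ref{app bound} $\to$ CNN $g$, then graft a subnet) is a workable variant, but it needs a CNN$\circ$(shallow~net) composition lemma that is not stated in the paper, and it must replace the $O(\log n)$-depth claim with the $\Theta(u_n^{-1})$-depth piecewise-linear construction as in Lemma~\ref{app log}.
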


The convergence rate in Theorem \ref{rate logistic} is the same as \cite[Theorem 3.3]{bos2022convergence}, which studied multi-class classification using fully-connected deep neural networks with cross entropy loss. If, in addition, the Tsybakov noise condition (\ref{Tsybakov}) holds, by combining Theorem \ref{rate logistic} with the calibration inequality (\ref{logistic calibration}), we can get the following convergence rate for the classification risk:
\[
\bE_{\cD_n} [\cR(\widehat{f}_{\phi,n}) ] \lesssim n^{-\frac{q+1}{q+2} \frac{(1+\beta)\alpha}{(1+\beta)\alpha+d}} \log^2 n.
\]
Note that this rate is the same as the optimal rate (\ref{optimal rate}) up to a logarithmic factor, when $q=0$ and $\beta=1$. For $\beta=0$, the obtained rate is not minimax optimal for the excess classification risk. However, as shown by \cite[Corollary 2.1]{zhang2024classification}, the rate in Theorem \ref{rate logistic} is indeed minimax optimal up to a logarithmic factor for the excess $\phi$-risk when $\beta=0$. So, even if the logistic classification can achieve the minimax optimal convergence rates for classification, it is in general not possible to derive it through the rates for excess $\phi$-risk. 

There are other papers \cite{kohler2020statistical,liu2021besov,shen2022approximation} studying the convergence rates of CNNs with logistic loss. The paper \cite{kohler2020statistical} imposed a max-pooling structure for the conditional class probability that is related to the structure of convolutional networks. So, their result is not comparable to ours. \cite{liu2021besov} used a similar setting as ours and derived the rate $n^{-\frac{\alpha}{2\alpha+2(\alpha\lor d)}}$ (ignoring logarithmic factors) for the excess $\phi$-risk under the assumption that $\eta$ is supported on a manifold of $d$ dimension. The article \cite{shen2022approximation} also considered low-dimensional distributions and obtained the rate $n^{-\frac{\alpha}{2\alpha+d}}$ for the excess $\phi$-risk when $f^*_\phi\in \cH^\alpha$, which is more restricted than the assumption $\eta\in \cH^{\alpha}$. Our learning rate $n^{-\frac{\alpha}{\alpha+d}}$ in Theorem \ref{rate logistic} (for $\beta=0$) is better than those in \cite{liu2021besov} and \cite{shen2022approximation}, but their results can be applied to low-dimensional distributions. It would be interesting to generalize our result to these distributions.

\section{Conclusion}\label{sec: conclusion}

In this paper, we have studied approximation and learning capacities of convolutional neural networks with one-side zero-padding and multiple channels. We have derived new approximation bounds for CNNs with norm constraint on the weights. To study the generalization performance of learning algorithms induced by these networks, we also proved new bounds for their covering number. Based on these results, we established rates of convergence for CNNs in nonparametric regression and classification problems. Many of the obtained convergence rates are known to be minimax optimal.

There is a restriction on the smoothness of the target functions in our results. We think this restriction is due to the proof techniques of our approximation bound (Theorem \ref{app bound}), rather than the architecture of CNNs. It may be possible to use the ideas of network constructions from related works, such as \cite{oono2019approximation,zhou2020theory}, to remove the restriction, which we leave as a future work.

\section*{Acknowledgments}
This paper was submitted while Y. Yang was with City University of Hong Kong. The work described in this paper was partially supported by Discovery Project (DP240101919) of the Australian Research Council, InnoHK initiative, The Government of the HKSAR, Laboratory for AI-Powered Financial Technologies, the Research Grants Council of Hong Kong [Projects Nos. CityU 11315522, CityU 11303821] and National Natural Science Foundation of China [Project No. 12371103]. We thank the referees for their helpful comments and suggestions on the paper.

\bibliographystyle{siamplain}
\bibliography{references}

\end{document}